
\documentclass[twocolumn,10pt]{asme2ej}
\usepackage[font=footnotesize]{subcaption}
\usepackage{epsfig} 
\usepackage{hyperref}
\usepackage{bm}
\usepackage{xcolor}

\usepackage{amssymb}
\usepackage{amsmath}
\newcommand{\comment}[1]{}
\newtheorem{proposition}{Proposition}
\newtheorem{remark}{Remark} 	
\newtheorem{definition}{Definition} 
 
\DeclareMathOperator*{\argmax}{arg\,max}

%

\title{Modeling and Prediction of Rigid Body Motion with Planar Non-Convex Contact}

\author{Jiayin Xie and Nilanjan Chakraborty
    \affiliation{
	Department of Mechanical Engineering\\
	Stony Brook University\\
	Stony Brook, New York 11790\\
    Email: \{jiayin.xie,~nilanjan.chakraborty\}@stonybrook.edu
    }	
}


\begin{document}

\maketitle    

\begin{abstract}
 {\it We present a principled method for motion prediction via dynamic simulation for rigid bodies in intermittent contact with each other where the contact region is a planar non-convex contact patch.
 Such methods are useful in planning and control for robotic manipulation. The planar non-convex contact patch can either be a topologically connected set or disconnected set.  Most work in rigid body dynamic simulation assume that the contact between objects is a point contact, which may not be valid in many applications. In this paper,  by using the convex hull of the contact patch, we build on our recent work on simulating rigid bodies with convex contact patches for simulating motion of objects with planar non-convex contact patches. We formulate a discrete-time mixed complementarity problem where we solve the contact detection and integration of the equations of motion simultaneously. We solve for the equivalent contact point (ECP) and contact impulse of each contact patch simultaneously along with the state, i.e., configuration and velocity of the objects. We prove that although we are representing a patch contact by an equivalent point, our model for enforcing non-penetration constraints ensure that there is no artificial penetration between the contacting rigid bodies.  We provide empirical evidence to show that our method can seamlessly capture transition among different contact modes like patch contact, multiple or single point contact.   }
\end{abstract}

\section{Introduction}
\label{se:intro}

\begin{figure}[t]
\includegraphics[width=0.45\textwidth]{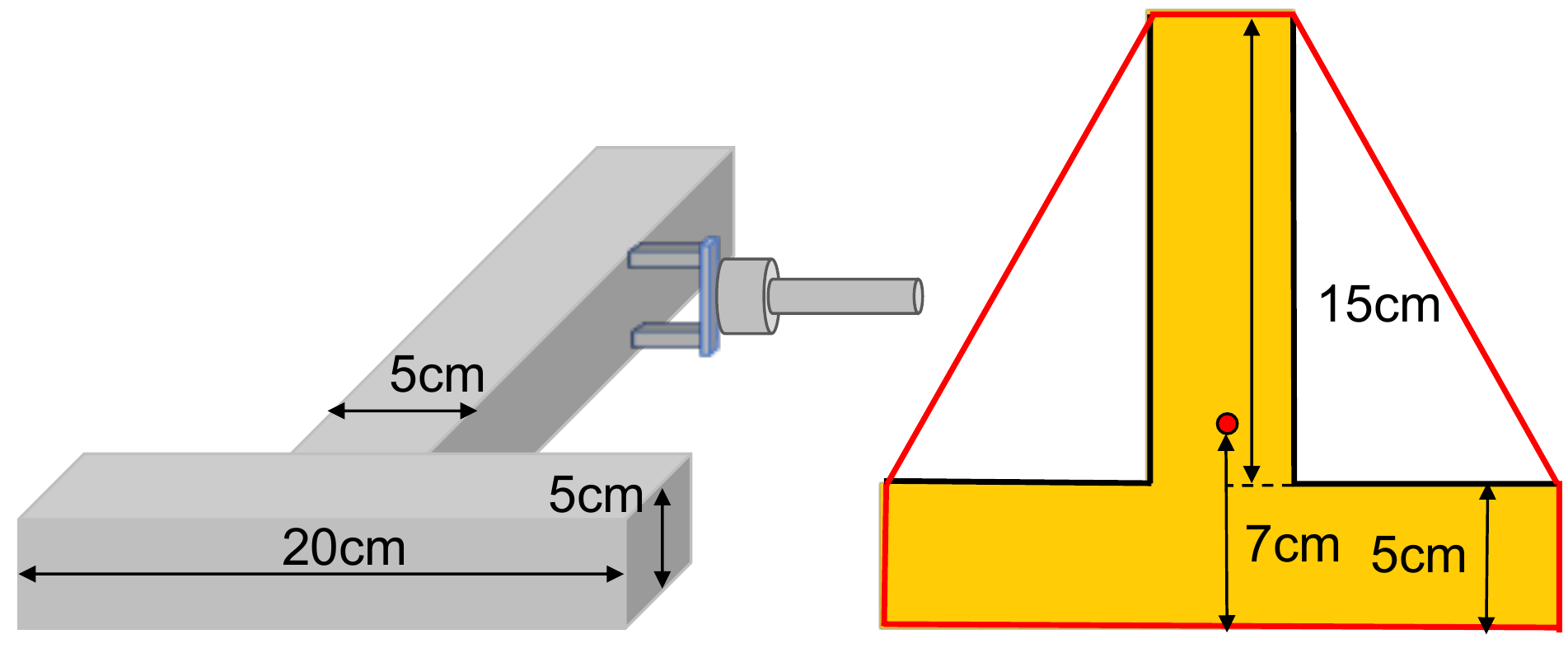}%
\caption{(Left) A T-shaped bar on planar surface is manipulated by a gripper while being supported on the plane. (Right) The planar contact between the bar and support surface is a non-convex T-shaped patch. The red line shows the convex hull for the contact patch.}
\label{figure_Motivation} 
\end{figure}
Rigid body motion prediction via dynamic simulation is a key enabling technology in solving many exemplar robotic manipulation tasks like manipulation with multi-fingered hands, manipulation with vibratory plates, automated parts feeder design, and design of microrobots~\cite{ReznikC98,SongTVP04,VoseUL09,BerardNAT10,XieB+2019}.
Many of these manipulation tasks involve point and surface contacts between the rigid body that is being manipulated and a flat plane on which the body rests~\cite{ReznikC98,VoseUL09,DafleR+14}. Furthermore, the occurrence of multiple intermittent contacts makes the prediction of the motion more complicated. There are applications in which the contact between two objects may be over a patch that can be modeled as a non-convex set. For example, Figure~\ref{figure_Motivation} shows a robot manipulator manipulating a T-shaped bar where the contact between the ground and the bar is a planar non-convex set. Such situations may arise when a robot manipulator with a parallel jaw gripper is trying to reconfigure a heavy bar with support from the table, so that it does not have to support the full weight. State-of-the-art dynamic simulation algorithms that can be used to predict motions during planning, usually assume point contact between two objects (except~\cite{XieC16, XieC18a}), which is clearly violated in Figure~\ref{figure_Motivation}. There are no well-principled approaches to predict the effect of applying a force/torque on the bar. In this paper, we seek to develop principled algorithms for predicting motion of rigid bodies in intermittent contact where the contacts can be modeled as a planar non-convex set.


Figure~\ref{figure:different_contact_case} shows the key types of contact between objects. Most existing mathematical models for motion of objects with intermittent contact like Differential Algebraic Equation (DAE) models~\cite{Haug1986} and Differential Complementarity Problem (DCP) models~\cite{Cottle2009,Trinkle1997,PfeifferG08} assume the contact between the two objects is a single point contact (top left in Figure~\ref{figure:different_contact_case}). However, for convex contact patch (middle row in Figure~\ref{figure:different_contact_case}), the point contact assumption is not valid. In such case, multiple contact points are usually chosen in an ad hoc manner, which can lead to inaccuracies in simulation (Please see~\cite{XieC16} for example scenarios). Recently, we developed an approach~\cite{XieC16} to simulate contacting rigid bodies with convex contact patches (line and surface contact). In~\cite{XieC18a}, we develop an approach for simulating contacting bodies where the contact patch is non-convex but can be modeled as a union of convex sets (bottom row, right column in Figure~\ref{figure:different_contact_case}).  In this paper, we focus on simulating bodies with planar non-convex contact patch, where the non-convex contact patch may not be a union of convex sets. The contact can be multiple point contacts or a general planar non-convex patch contact (top row, right column and bottom row in Figure~\ref{figure:different_contact_case}). Such situations arise when a robot is manipulating objects supported by a horizontal plane.

\begin{figure}[h]
\centering
\includegraphics[width=0.4\textwidth]{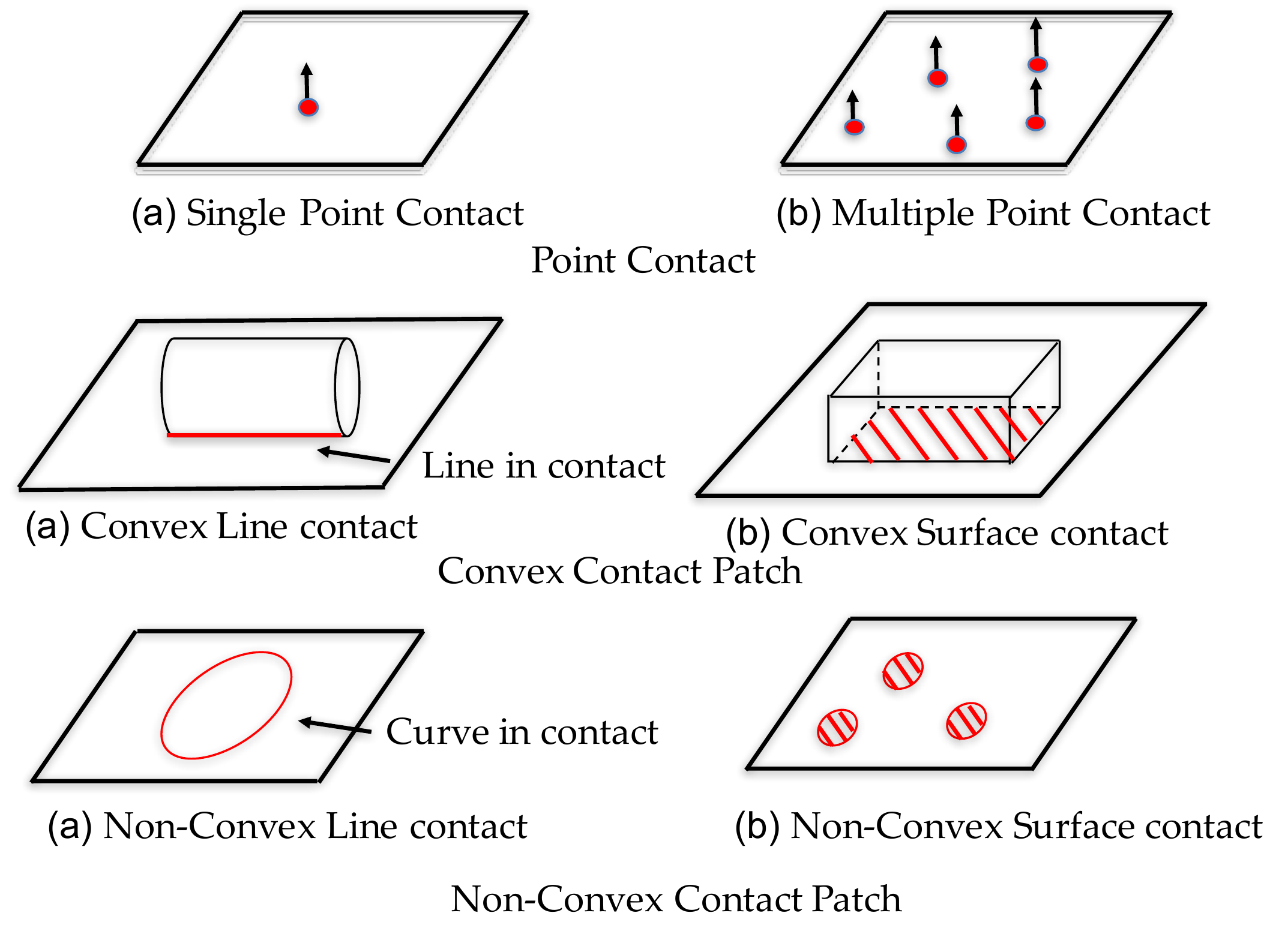}
\caption{Different types of contact between an object and a flat surface. Our focus is on simulating rigid bodies with type of contact shown in last row and first row, pane (b).}
\label{figure:different_contact_case} 
\end{figure}

For a single convex contact patch, we know that there exists a unique point on the contact surface where the integral of total moment due to normal force acting on this point is zero. This point is used to model line or surface contact as a point contact and thus it is called the {\em equivalent contact point} (ECP)~\cite{XieC16}. Using the concept of ECP, in~\cite{XieC16}, we present a principled method for simulating intermittent contact with convex contact patches (line and surface contact). This method solves for the ECP as well as the contact impulses by {\em incorporating  the collision detection within the dynamic simulation time step}. This method is called the {\em geometrically implicit time-stepping method} because the geometric information of contact points and contact normal are solved as a part of the numerical integration procedure. In~\cite{XieC18a}, for non-convex contact patches that can be modeled as a union of convex sets, we use an ECP to model the effect of each convex contact patch and solve for the ECP and its associated contact wrenches on each contact patch separately. However, the limitation of this method was that the force/moment distribution and the ECP was non-unique, although the state of the object was unique. Furthermore, if there are more than three convex sets forming the non-convex patch, the force/moment in some of the contact patches may become zero.

In this paper, we extend the method in~\cite{XieC16}, by using the convex hull of the contact patch for modeling the contact constraints in the equations of motion. Although, we have intermittent contact and the contact patch may change (even topologically, we can go from a connected non-convex patch to multiple point contact), we do not need to form the convex hull of the contact patch during the simulation depending on the contact mode. Instead, we use the convex hull of the non-convex object that is being manipulated. And since we solve the collision detection problem simultaneously with the equations of motion (i.e., our method is geometrically implicit), we can ensure that the {\em convex hull of the contact patch} will always be automatically obtained through our contact detection constraints. Note that distinct from~\cite{XieC16}, the ECP may not be a point within the physical contact region (but it will be a point within the convex hull of the contact region). We prove that even though we are modeling a non-convex contact patch with an equivalent contact point that may not lie within the patch, the contact constraints are always satisfied at the end of the time-step and there is no artificial penetration between the objects. We show simulation results validating our approach with our previous models~\cite{XieC18a,XieC18b}. We also present simulation results showing that the object can seamlessly transition among different contact modes like non-convex patch contact, multiple point contact, line contact, and single point contact. A preliminary version of this work was presented in~\cite{XieC2019}. We extend on the paper in~\cite{XieC2019}, by including complete proofs of Proposition~\ref{pro:1} and Proposition~\ref{pro:2}. We have also extended the simulation results section by including more simulation results.

\section{Related Work}
\label{se:re}
In this section, we present the related work in rigid body dynamic simulation with a focus on methods for dealing with intermittent contact. There is also a substantial body of work on development of discretization schemes for integrating and simulating rigid body motion that we do not discuss here (please see the literature on variational integrators~\cite{MarsdenJM01,JohnsonRM09,KobilarovCD09} and references therein). 
We model the continuous time dynamics of rigid bodies that are in intermittent contact with each other as a Differential Complementarity Problem (DCP).  
Let ${\bf u}\in \mathbb{R}^{n_1}$,  ${\bf v}\in \mathbb{R}^{n_2}$ and let ${\bf g}$ :$ \mathbb{R}^{n_1}\times \mathbb{R}^{n_2} \rightarrow \mathbb{R}^{n_1} $, ${\bf f}$ : $ \mathbb{R}^{n_1}\times \mathbb{R}^{n_2} \rightarrow \mathbb{R}^{n_2}$ be two vector functions. 
\begin{definition}
Let ${\bf x}, {\bf y} \in \mathbb{R}^n$ be two vectors, with $x_i, y_i$ as the $i$th component of ${\bf x}$ and ${\bf y}$ respectively. The vectors ${\bf x}$ and ${\bf y}$ are said to satisfy a complementarity constraint if
$$ x_iy_i = 0, ~x_i \geq 0, ~y_i \geq 0, ~\forall i. $$
Equivalently, each component of the vectors ${\bf x}$ and ${\bf y}$ is non-negative and ${\bf x}$ is orthogonal to ${\bf y}$.
A shorthand notation for the complementarity constraints is $0 \le {\bf x} \perp {\bf y} \ge 0$.
\end{definition}
\begin{definition}
The differential (or dynamic) complementarity problem~\cite{Facchinei2007} is to find ${\bf u}$ and ${\bf v}$ satisfying
$$\dot{\bf u} = {\bf g}({\bf u},{\bf v}), \ \ \ 0\le {\bf v} \perp {\bf f}({\bf u},{\bf v}) \ge 0 $$
\end{definition}
\begin{definition}
The mixed complementarity problem is to find ${\bf u}$ and ${\bf v}$ satisfying
$${\bf g}({\bf u},{\bf v})=0, \ \ \ 0\le {\bf v} \perp {\bf f}({\bf u},{\bf v}) \ge 0.$$
If the functions ${\bf f}$ and ${\bf g}$ are linear, the problem is called a mixed linear complementarity problem (MLCP), otherwise, the problem is called a mixed nonlinear complementarity problem (MNCP). Our continuous time dynamics model is a DCP whereas our discrete-time dynamics model is a MNCP. 
\end{definition}

The DCP model formulates the intermittent contact between bodies in motion as a complementarity constraint~\cite{Lotstedt82, AnitescuCP96, Pang1996, StewartT96, Liu2005, PfeifferG08, DrumwrightS12, Todorov14, Studer2009, CapobiancoE2018,BrulsAC2018}. 
DCP models are solved numerically with time-stepping schemes. The time-stepping problem is: {\em given the state of the system and applied forces, compute an approximation of the system one time step into the future. } Solving this problem repeatedly will give an approximate solution to the equations of motion. 
When a fixed-time stepping scheme is used to solve a DCP, it is usually implicit in the formulation that the collision between two objects is perfectly inelastic or plastic. Since we will be using a fixed time-stepping scheme, we also assume that the collision between two objects is perfectly inelastic. However, note that it is possible to remove the assumption of plastic collision within a complementarity framework (please see~\cite{AnitescuP02,NilanjanChakraborty2007}). In general, collision and impact laws for rigid body motion has been widely studied. A few references in this direction are~\cite{Brogliato2000, Jia2013,Tavakoli+2012,ChatterjeeRuina1998}. 

There are different assumptions for forming the discrete equations of motion, which makes the discrete-time system Mixed Linear Complementarity problem (MLCP)~\cite{AnitescuP97, AnitescuP02} or mixed non-linear complementarity problem (MNCP)~\cite{Tzitzouris01,NilanjanChakraborty2007}. The MLCP problem linearizes the friction cone constraints and the distance function between two bodies (which is a nonlinear function of the configuration), sacrificing accuracy for speed. Depending on whether the distance function is approximated, the time-stepping schemes can also be divided into geometrically explicit schemes~\cite{AnitescuCP96, StewartT96} and geometrically implicit schemes~\cite{Tzitzouris01}. 

In geometrically explicit schemes, at the current state, a collision detection routine is called to determine separation or penetration distances between the  bodies, but this information is not incorporated as a function of the unknown future state at the end of the current time step. A goal of a typical time-stepping scheme is to guarantee consistency of the dynamic equations and all model constraints at the end of each time step. However, since the geometric information is obtained and approximated only at the start of the current time-step, then the solution will be in error. Apart from being geometrically explicit, most of the existing complementarity-based dynamic simulation methods and software also assume point contact between objects~\cite{CouBullet,SmithODE,TodorovET2012,TasoraS+2015,LeeG+2018,BerardT+2007}. A patch contact is usually approximated by ad hoc choice of $3$ contact points on the contact patch. In~\cite{XieC16}, we compared our non-point contact model with two popular point-based models, namely, Open Dynamic Engine (ODE)~\cite{SmithODE} and Bullet~\cite{CouBullet} in a pure translation task with a square contact patch where the analytic closed-form solution is known. We showed that our results matched the theoretical results, and was more accurate compared to ODE and Bullet. Thus, in~\cite{XieC16,NilanjanChakraborty2007}, we used a geometrically implicit time stepping scheme for solving convex contact patches problem, which is also the method used in this paper. The resulting discrete time problem is a MNCP.
\section{Dynamic Model for Rigid Body Systems}
\label{se:dynamic}
We will now formulate the equations of motion of rigid objects moving with intermittent contact as a differential complementarity problem (DCP) for continuous time and as a nonlinear complementarity problem (NCP) for discrete time. 
The dynamic model is made up of the following parts: (a) Newton-Euler equations (b) kinematic map relating the generalized velocities to the linear and angular velocities (c) friction law and (d) non-penetration constraints.  
The parts (a), (b) form a system of ordinary differential equations~\cite{rao2005dynamics} and they are standard for any complementarity-based formulation. Part (c) can be written as a system of complementarity constraints, which is based on Coulomb friction law using the maximum work dissipation principle. Part (d) incorporates the geometry of contact set as  system of complementarity constraint~\cite{NilanjanChakraborty2007,XieC16,XieC18a}. 
\subsection{Equivalent Contact Point (ECP)}
\label{sec:ECP}
The contact between two objects can be point contact or non-point (i.e., patch) contact. Furthermore, the patch contact can be planar patch contact or non-planar patch contact. In this paper, we assume that the contact is planar patch contact (which includes point contact as a special case). Planar patch contact can be either convex patch contact or non-convex patch contact and the non-convex patch contact can be union of disconnected contact regions. Figure~\ref{figure:Equivalent_contact_point} gives a schematic sketch of a convex contact patch and a non-convex contact patch.

Irrespective of the geometry of the contact patch, the normal contact force that prevents penetration of the two objects is distributed over the contact patch. From basic physics, we know that there will be  a point in the convex hull of the contact patch such that the moment of the normal force about the point is $0$. We call this point the {\em equivalent contact point (ECP)} of the contact patch. The ECP along with the equivalent contact wrench (due to distributed normal force as well as distributed friction force over the contact patch) that acts at this point so that the two objects do not penetrate is unique. Note that the ECP does not necessarily lie within the contact patch, although it will lie in the convex hull of the contact patch (see Figure~\ref{figure:Equivalent_contact_point}, where the convex hull is the red curve and the ECP is the red point). 

In this section, we will formulate our equations of motion in terms of the ECP and the equivalent contact wrench acting at the ECP. We will also present algebraic and complementarity constraints that allows computation of the ECP, contact wrench as well as the state of the objects in a discrete-time framework. Note that in our method, there is {\em no assumption made on the nature of the pressure distribution between the two surfaces}. The pressure distribution was used to define the notion of ECP, but it is not required for the computation of ECP and equivalent contact wrench. We will show that the discrete-time equations of motion gives a contact wrench acting at the ECP such that the non-penetration between the two objects is always guaranteed.


\begin{figure}[ht]
\centering
\includegraphics[width=0.5\textwidth]{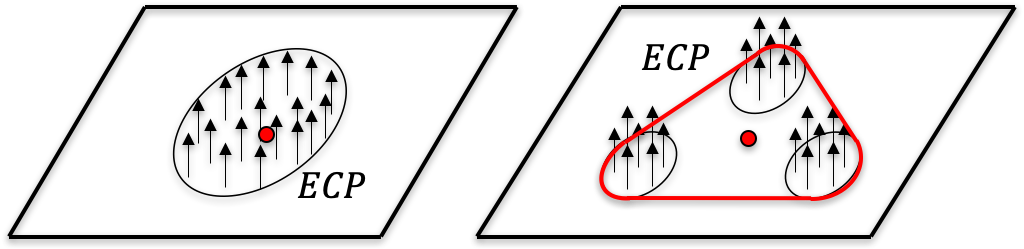}
\caption{Schematic sketch of the normal force distribution on contact patches that prevents penetration and associated ECPs. (Left) The ECP in single convex contact patch. (Right) The ECP for non-convex contact patch which does not lie within the patch but lies in the convex hull of the contact patch.}
\label{figure:Equivalent_contact_point} 
\end{figure} 

\subsection{Newton-Euler equations of motion}

To describe the dynamic model mathematically, we will introduce some notation first. Let ${\bf q}$ be the position of the center of mass of the object and the orientation of the object (${\bf q}$ can be $6 \times 1$ or $7\times 1$ vector depending on the representation of the orientation). We will use unit quaternion to represent the orientation unless otherwise stated. The generalized velocity $\bm{\nu}$ is the concatenated vector of linear (${\bf v}$) and spatial angular ($^s\bm{\omega}$) velocities. \comment{In general, the planar contact patch is composed of $n_c$ number of individual topologically disconnected regions or patches.} The effect of the contact patch is modeled as point contact of equivalent contact points (ECPs) ${\bf a}_1$ or ${\bf a}_2$ on two objects. Let
$\lambda_n$ be the magnitude of normal contact force,
$\lambda_t$ and $\lambda_o$ be the orthogonal components of the friction force on the tangential plane, and
$\lambda_r$ be the frictional moment about the contact normal.

\begin{equation} 
\begin{aligned}
\label{eq1}
{\bf M}({\bf q})
{\dot{\bm{\nu}}} = 
{\bf W}_{n}\lambda_{n}+
{\bf W}_{t} \lambda_{t}+
{\bf W}_{o} \lambda_{o}
+{\bf W}_{r}\lambda_{r}+
\bm{\lambda}_{app}+\bm{\lambda}_{vp}
\end{aligned}
\end{equation}
where ${\bf M}(\bf{q})$ is the generalized inertia matrix.  $\bm{\lambda}_{app}$ is the  vector of external forces (including gravity) and moments, $\bm{\lambda}_{vp}$ is the  vector of Coriolis and centripetal forces. The unit wrenches ${\bf W}_{n}$, ${\bf W}_{t}$, ${\bf W}_{o}$ and ${\bf W}_{r}$ are dependent on configuration ${\bf q}$ and ECP (${\bf a}_{1}$ or ${\bf a}_{2}$), and map the normal contact forces, frictional forces and moments to the inertia frame:
\begin{equation}
\begin{aligned}
\label{equation:wrenches}
&{\bf W}_{n} =  \left [ \begin{matrix} 
{\bf n}\\
{\bf r}\times {\bf n}
\end{matrix}\right]
\quad
{\bf W}_{t} =  \left [ \begin{matrix} 
{\bf t}\\
{\bf r}\times {\bf t}
\end{matrix}\right]
\\
&{\bf W}_{o} =  \left [ \begin{matrix} 
{\bf o}\\
{\bf r}\times {\bf o}
\end{matrix}\right]
\quad 
{\bf W}_{r} =  \left [ \begin{matrix} 
\ {\bf 0}\\
\ {\bf n} 
\end{matrix}\right]
\end{aligned}
\end{equation}
where $({\bf n},{\bf t},{\bf o})$ are unit vectors of contact frame and ${\bf r}$ is the vector from center of mass (CM) to the ECP: ${\bf r} = {\bf a}_1 - {\bf q}$, ${\bf 0}$ is a $3\times 1$ vector with each entry equals to zero. 
\subsection{Kinematic map}
The kinematic map below gives the relationship between the the generalized velocity $\bm{\nu}$ and the time derivative of the position and orientation, $\dot{{\bf q}}$. For unit quaternion representation of rotation, ${\bf G}$ is a $6 \times 7$ matrix. 
\begin{equation}
\dot{\bf q} = {\bf G}({\bf q}) \bm{\nu}
\end{equation}

\subsection{Friction Model}
Our friction model is based on the maximum power dissipation principle and generalized Coulomb's friction law, which has been previously proposed in the literature for point contact~\cite{Moreau1988}. The maximum power dissipation principle states that among all the possible contact wrenches (i.e., forces and moments) 
the wrench that maximize the power dissipation at the contact are selected. 

For non-point contact, we will use a generalization of the maximum power dissipation principle, where, we select contact wrenches and contact velocities that maximize the power dissipation over the entire contact patch. In~\cite{XieB+2019}, we have shown that the problem formulation using the power loss over the whole contact patch can be reduced to the friction model for point contact with the ECP as the chosen point. Mathematically, the power dissipated over the entire surface, $P_c$ is given by:
\begin{equation}
   P_c = - (v_t \lambda_t + v_o\lambda_o + v_r \lambda_r)
\end{equation}
 where $v_t ={\bf W}^{T}_{t}
\bm{\nu} $ and $v_o={\bf W}^{T}_{o}
\bm{\nu}$ are the components of tangential velocities at the ECP. Similarly, the angular velocity about contact normal $v_r ={\bf W}^{T}_{r}
\bm{\nu} $. $\lambda_t$, $\lambda_o$ are the magnitudes net tangential forces at the ECP and $\lambda_r$ is the magnitude of net moment about the axis normal to the contact patch and passing through the ECP.

For specifying a friction model, we also need a law or relationship that bounds the magnitude of the friction forces and moments in terms of the magnitude of the normal force~\cite{Goyal1991}. Here, we use an ellipsoidal model for bounding the magnitude of tangential friction force and friction moment. This friction model has been previously proposed in the literature~\cite{Goyal1991,Moreau1988,XieC16,NilanjanChakraborty2007} and has some experimental justification~\cite{Howe1996}.
Thus, the contact wrench is the solution of the following optimization problem: 
\begin{equation}
\begin{aligned}
\label{equation:friction}
\argmax_{\lambda_t, \lambda_o, \lambda_r} \quad -(v_t \lambda_t + v_o\lambda_o + v_r \lambda_r)\\
{\rm s.t.} \quad \left(\frac{\lambda_t}{e_t}\right)^2 + \left(\frac{\lambda_o}{e_o}\right)^2+\left(\frac{\lambda_r}{e_r}\right)^2 - \mu^2 \lambda_n^2 \le 0
\end{aligned}
\end{equation}
where the magnitude of contact force and moment at the ECP, namely, $\lambda_t$, $\lambda_o$, and $\lambda_r$ are the optimization variables. The parameters, $e_t$, $e_o$, and $e_r$ are positive constants defining the friction ellipsoid and $\mu$ is the coefficient of friction at the contact~\cite{Howe1996,Trinkle1997}. Thus, we can use the contact wrench at the ECP to model the effect of entire distributed contact patch. Note that, distinct from~\cite{XieC16}, the contact patch may not be convex. 
\comment{
The effect of the patch can be modeled as point contact based on the ECP $\bm{a}_1$ or $\bm{a}_2$:
\begin{equation}
\begin{aligned}
{\rm max} \quad -(v_t \lambda_t + v_o\lambda_o + v_r \lambda_r)\\
{\rm s.t.} \quad \left(\frac{\lambda_t}{e_t}\right)^2 + \left(\frac{\lambda_o}{e_o}\right)^2+\left(\frac{\lambda_r}{e_r}\right)^2 - \mu^2 \lambda_n^2 \le 0
\end{aligned}
\end{equation}
where $v_t$ and $v_o$ are the tangential components of the relative velocity at ECP of the contact patch, $v_r$ is the relative angular velocity about the normal at ECP. $e_t,e_o$ and $e_r$ is the given positive constants defining the friction ellipsoid and $\mu$ represents the coefficient of friction at the contact \cite{Howe1996, Trinkle1997}. This constraint is the elliptic dry friction condition suggested in \cite{Howe1996} based upon evidence from a series of contact experiments. This model states that among all the possible contact forces and moments that lie within the friction ellipsoid, the forces and moment that maximize the power dissipation at the contact (due to friction) are selected.

This argmax formulation of the friction law has a useful alternative formulation~\cite{trinkle2001dynamic}
 \begin{equation}
\begin{aligned}
\label{equation:friction}
0&=
e^{2}_{t}\mu \lambda_{n} 
\bm{W}^{T}_{t}\cdot
\bm{\nu}+
\lambda_{t}\sigma\\
0&=
e^{2}_{o}\mu \lambda_{n}  
\bm{W}^{T}_{o}\cdot
\bm{\nu}+\lambda_{o}\sigma\\
0&=
e^{2}_{r}\mu \lambda_{n}\bm{W}^{T}_{r}\cdot
\bm{\nu}+\lambda_{r}\sigma\\
\end{aligned}
\end{equation}
\begin{equation}
\label{equation:friction_c}
0 \le \mu^2\lambda_{n}^2- \lambda_{t}^2/e^{2}_{t}- \lambda_{o}^2/e^{2}_{o}- \lambda_{r}^2/e^{2}_{r} \perp \sigma \ge 0
\end{equation}
where $\sigma$ is the magnitude of the slip velocity on the contact patch.
}

\subsection{Time-stepping Formulation}
We use a velocity-level formulation and an Euler time-stepping scheme to discretize the above system of equations. Let $t_u$ denote the current time and $h$ be the duration of the time step, the superscript $u$ represents the beginning of the current time and the superscript $u+1$ represents the end of the current time. Using $\dot{\bm{\nu}} \approx ( {\bm{\nu}}^{u+1} -{\bm{\nu}}^{u} )/h$, $\dot{\bf q} \approx( {\bf q}^{u+1} -{\bf q}^{u} )/h$ and writing forces as impulses ( $p_{(.)}=h\lambda_{(.)}$), we  discretize Newton-Euler equations and kinematic map:
\begin{equation}
\begin{aligned}
\label{equ:discrete_NE}
0 = &
-{\bf M}^{u+1}({\bm{\nu}}^{u+1} - {\bm{\nu}}^{u})+ {\bf W}_{n}p^{u+1}+ {\bf W}_{t}p^{u+1}\\
&+{\bf W}_{o}p^{u+1}_{o} +{\bf W}_{r}p^{u+1}_{r} +{\bf p}^{u}_{app}+{\bf p}^{u}_{vp}
\end{aligned}
\end{equation}
\begin{equation}
\label{equ:discrete_KE}
0 =-{\bf q}^{u+1}+{\bf q}^{u}+ h{\bf G}({\bf q}^u)\bm{\nu}^{u+1}
\end{equation}

Using the Fritz-John optimality conditions of Equation~\eqref{equation:friction}, we can write~\cite{trinkle2001dynamic}:
\begin{equation}
\begin{aligned}
\label{eq:friction}
0&=
e^{2}_{t}\mu p^{u+1}_{n} 
{\bf W}^{T}_{t}
\bm{\nu}^{u+1}+
p^{u+1}_{t}\sigma^{u+1} \\
0&=
e^{2}_{o}\mu p^{u+1}_{n}  
{\bf W}^{T}_{o}
\bm{\nu}^{u+1}+p^{u+1}_{o}\sigma^{u+1} \\
0&=
e^{2}_{r}\mu p^{u+1}_{n}{\bf W}^{T}_{r}
\bm{\nu}^{u+1}+p^{u+1}_{r}\sigma^{u+1}\\
0& \le (\mu p^{u+1}_n)^2- (\frac{p^{u+1}_{t}}{e_{t}})^2- (\frac{p^{u+1}_{o}}{e_{o}})^2- (\frac{p^{u+1}_{r}}{e_{r}})^2 \perp \sigma^{u+1} \ge 0
\end{aligned}
\end{equation}
where $\sigma$ is a Lagrange multiplier corresponding to the inequality constraint in~\eqref{equation:friction}. Note that ${\bf W}_{t}, {\bf W}_{n}, {\bf W}_{o}, {\bf W}_{r}$ in Equations~\eqref{eq:friction} are dependent on ECPs at the end of time step $u+1$. Therefore, our discrete-time model is a {\em geometrically implicit} model.

\section{Modeling Planar Non-convex Patch Contact}
\label{se:Model_nonconvex}
In this section, we will present our method for modeling a planar non-convex contact patch. Although, we will present the equations here in a more general manner, for concreteness, one can think  that one object is a non-convex object and the other object is a plane (or a face of a polyhedron much larger than the non-convex object). This is the scenario where planar non-convex contact patch is easy to visualize and this situation is quite prevalent in robotics.
Let $F$ and $G$ be the two objects, where, without loss of generality, the object $F$ is the non-convex object and $G$ is the convex object.
When two objects $F$ and $G$ have planar contact, the planar contact patch $\mathcal{S}$ is a non-empty finite subset of line or  plane. We will use the convex hull of object $F$, denoted by $Conv(F)$ to model the non-convex object $F$ (this will be justified later in the section). We will now present the contact constraints for non-penetration of rigid bodies.

\subsection{Non-penetration constraints}
In complementarity-based formulation of dynamics, the contact constraint for a potential contact is written as 
\begin{equation} \label{equation:normal contact}
0\le \lambda_{n} \perp \psi_{n}(
\bf{q},t) \ge 0
\end{equation}
where $\psi_{n}(\bf{q},t)$ is the {\em gap function or distance function} for the contact with the property $\psi_{n}(\bf{q},t) > 0$ for separation, $\psi_{n}(\bf{q},t) = 0$ for touching and $\psi_{n}(\bf{q},t) < 0$ for interpenetration. The complementarity function models the physical fact that the contact force magnitude is positive when the objects are in contact (i.e., distance function is zero) and the contact force magnitude is zero when the distance function is greater than zero. When both distance function and contact force is equal to zero, it implies grazing contact with tangential velocity (i.e., no normal component of relative velocity towards the surfaces at the contact point). We will also call the constraints in~\eqref{equation:normal contact} as the {\em non-penetration constraints}, since they ensure the constraints that solids cannot penetrate each other (i.e., $\psi_n(\bf{q}, t) \geq 0$). 

Note that there is usually no closed form expression for $\psi_{n}(\bf{q},t)$. Thus, in a discrete-time framework, it is usually hard to ensure satisfaction of the complementarity constraints at the end of the time step.  A collision detection module provides information about the closest (contact) points and the normal to the object surfaces at these points, which is used to construct a first order approximation of the distance function. Thus, only a first order approximation of the non-penetration constraints are satisfied at the end of the time step. This can lead to inaccuracies in motion prediction, even for point contact because of phantom collisions or penetration between the objects~\cite{NilanjanChakraborty2007}. For non-point contact, there can be an uncountably many number of contact points and thus the collision detection problem becomes ill-posed.

In~\cite{NilanjanChakraborty2007}, we presented a method for incorporating the geometry of the contacting objects so that Equation~\eqref{equation:normal contact} is satisfied exactly at the end of the time step and the contact points at the end of the time step are obtained. In~\cite{XieC16}, we showed that when the contact patch is a convex contact patch, this method actually computes the ECP along with the net contact wrench acting at the ECP to ensure that the non-penetration constraints are satisfied at the end of the time step. We will now show that the contact constraints presented below allows us to compute the ECP of a non-convex contact patch as well as the contact wrench (that ensures that the non-penetration constraints are satisfied at the end of the time step) as part of the numerical integration of the equations of motion.

We assume that the convex hull of $F$, i.e., $Conv(F)$, and $G$ are
described by the intersecting convex inequalities $f_{i}(\bf{x}) \le 0, i = 1,...,m$, and $g_{j}(\bf{x}) \le 0, j = m+1,...,n$ respectively. Note that each individual convex constraint $f_{i}(\bf{x}) = 0$  describes the boundary of the convex hull. We also assume that the object $F$ is described by an intersection of inequalities, not necessarily convex. Single point contact, multi-point contact, and convex patch contact are all special cases of the contact that we are considering. Let $\bf{a}_1$ and $\bf{a}_2$ be the pair of equivalent contact points for $Conv(F)$ and $G$ respectively. For single point contact, $\bf{a}_1$ and $\bf{a}_2$ are the contact points on the two objects. Note that, in general, $\bf{a}_1$ may not be a point in $F$.

We will now rewrite the contact condition (Equation~\eqref{equation:normal contact}) in terms of the convex inequalities describing the objects, and combine it with an optimization problem to find the closest points.
Note that for any object that is described by a collection of inequalities $f_i(\bf{x}) \leq 0$, $i = 1, \dots, m$, then for any point $\bf{x}$, the point lies inside the object when $f_i(\bf{x}) < 0$, $\forall i$, on the boundary of object when $f_i(\bf{x}) = 0$ for some $i$ and $f_j(\bf{x}) \leq 0$, $j = 1, \dots, m$, $j \neq i$, and outside the object when $f_i(\bf{x}) > 0$ for some $i$. Thus, the contact condition (Equation~\eqref{equation:normal contact}) can be rewritten as one of the following two complementarity constraints~\cite{NilanjanChakraborty2007} by either using the distance function $\psi_n(\bf{q},t)$ as $\mathop{max}_{i=1,...,m} f_{i}(\bf{a}_{2}) \ge 0$ or $\mathop{max}_{j=m+1,...,n}g_{j}(\bf{a}_{1}) \ge 0$.
\begin{align}
\label{equation:contact_multiple_comp_1}
&0 \le \lambda_{n} \perp \mathop{max}_{i=1,...,m} f_{i}(\bf{a}_{2}) \ge 0\\
\label{equation:contact_multiple_comp_2}
&0 \le \lambda_{n} \perp \mathop{max}_{j=m+1,...,n}g_{j}(\bf{a}_{1}) \ge 0
\end{align}
In Equation~\eqref{equation:contact_multiple_comp_1}, if $\mathop{max}_{i=1,...,m} f_{i}(\bf{a}_{2}) > 0$ then the closest point on object $G$ to the convex hull of $F$ lies outside the set $Conv(F)$ and hence the object $F$. Thus, the objects are not in contact and consequently, $\lambda_n = 0$, (i.e., there is no contact force). If $\mathop{max}_{i=1,...,m} f_{i}(\bf{a}_{2}) = 0$, then $Conv(F)$ and $G$ are in contact and $\lambda_n > 0$. In this case, if the object $G$ is a flat plane, then we can conclude that object $F$ and $G$ are in contact, which would imply that the contact force magnitude $\lambda_n > 0$. If $G$ is not a flat plane, it does not necessarily imply a contact between $F$ and $G$. There are three cases that may arise (a) there is contact between $F$ and $G$ and the contact patch is planar (b) there is contact between $F$ and $G$ and the contact patch is non-planar (c) there is no contact between $F$ and $G$. For case (a), we can use the above equation as it is. For case (b), our method does not apply and we will not consider it further. For case (c), we have to perform additional computational checks.  We discuss both case (a) and case (c) below in more detail after we present the equations for computing the closest points $\bf{a}_1$ and $\bf{a}_2$. 

The closest points $\bf{a}_{1}$ and $\bf{a}_{2}$ are given by a solution to the following minimization problem for computing the distance between convex hull of $F$ and $G$:
\begin{equation}
\label{equation:optimazation}
(\bf{a}_{1},\bf{a}_{2}) = \arg \min_{\bf{\zeta}_1,\bf{\zeta}_2}\{ \|\bf{\zeta}_1-\bf{\zeta}_2 \| : \ f_{i}(\bf{\zeta}_1) \le 0,\ g_{j}(\bf{\zeta}_2) \le 0 \}
\end{equation}
As shown in~\cite{NilanjanChakraborty2007}, based on a modification of the KKT conditions, we can show that the ECPs need to satisfy the algebraic and complementarity constraints given below to solve the optimization problem above (Equation~\eqref{equation:optimazation}). We refer the readers to~\cite{NilanjanChakraborty2007} for the derivation of these equations. 
\begin{align}
\label{equation:re_contact_multiple_1}
&\bf{a}_{1}-\bf{a}_{2} = -l_{k}\nabla\mathcal{C}(\bf{F}_i,\bf{a}_{1})\\
\label{equation:re_contact_multiple_2}
&\nabla\mathcal{C}(\bf{F}_i,\bf{a}_{1})= -\sum_{j = m+1}^{n} l_{j} \nabla g_{j} (\bf{a}_{2})\\
\label{equation:re_contact_multiple_3}
&0 \le l_{i} \perp -f_{i}(\bf{a}_{1}) \ge 0 \quad i = 1,..,m,\\
\label{equation:re_contact_multiple_4}
&0 \le l_{j} \perp -g_{j}(\bf{a}_{2}) \ge 0 \quad j = m+1,...,n.
\end{align}
where $\nabla\mathcal{C}(\bf{F}_i,\bf{a}_{1}) = \nabla f_{k}(\bf{a}_{1})+\sum_{i\neq k}^{m} l_{i}\nabla f_{i}(\bf{a}_{1})$, $k$ represents the index of any one of the active constraints (i.e., the surface on which the ECP $\bf{a}_{1}, \bf{a}_{2}$ lies). We will also need an additional complementarity constraint (either Equation~\eqref{equation:contact_multiple_comp_1} or Equation~\eqref{equation:contact_multiple_comp_2}) to prevent penetration:
\begin{align}
\label{equation:re_contact_multiple_5}
0 \le \lambda_{n} \perp \mathop{max}_{j=m+1,...,n} g_{j}(\bf{a}_{1}) \ge 0
\end{align}
Equations~\eqref{equation:re_contact_multiple_1}$\sim$~\eqref{equation:re_contact_multiple_5} together gives the constraints that the equivalent contact points $\bf{a}_{1}$ and $\bf{a}_{2}$ should satisfy for ensuring no penetration between the objects. We prove this formally in Proposition $2$. 

As discussed in Section~\ref{sec:ECP}, the ECP lies in the convex hull of the contact patch. However, we do not know the contact patch at the end of the time step {\em a priori}, so it is not possible to compute the convex hull of the contact patch {\em a priori}. In the collision constraints to compute the ECP, we have used the convex hull of the object $F$ to formulate the equations. We prove below that when there is contact, by using the convex hull of $F$, the computed ECP lies within the convex hull of the contact patch. Thus, we do not need any {\em a priori} knowledge about the contact patch.



\begin{definition}
Let $\bf{x}$ be a point that lies on the boundary of a compact set $F$. Let $\mathbb{I}$ be the index set of active constraints for ${\bf x}$, i.e., $\mathbb{I} = \{i | f_i(\bf{x}) = 0, \ i = 1, 2, \dots, n \}$.  The normal cone to $F$ at ${\bf x}$, denoted by $\mathcal{C}(F,\bm{x})$, consists of all vectors in the conic hull of the normals to the surfaces (at ${\bf x}$) represented by the active constraints. Mathematically,
$$\mathcal{C}(F,\bf{x}) = \{ \bf{y} \vert  \bf{y} = \sum_{i \in \mathbb{I}} \beta_i \nabla f_i(\bf{x}), \beta_i\ge 0\}$$.
\end{definition}

\begin{definition}
\label{def:hyper}
Let $F$ be a compact convex set and let ${\bf x}_0$ be a point that lies on the boundary of $F$. Let $\mathcal{C}(F,\bf{x}_0)$ be the normal cone of $F$ at ${\bf x}_0$. The supporting plane of $F$ at ${\bf x}_0$ is a plane passing through ${\bf x}_0$ such that all points in $F$ lie on the same side of the plane. In general, there are infinitely many possible supporting planes at a point. In particular any plane
$ \mathcal{H}({\bf x}) = \{ {\bf x}|  \bm{\alpha}^T({\bf x} - {\bf x}_0) = 0\}$ 
 where ${\bm{\alpha}} \in \mathcal{C}(F,\bf{x}_0)$ is a supporting plane to $F$ at ${\bf x}_0$.
\end{definition}
\comment{
\begin{definition}
The touching solution between two objects $F$ and $G$ is for ECPs $\bf{a}_1$ and $\bf{a}_2$  satisfying:
\begin{enumerate}
\item The points $\bf{a}_1$ and $\bf{a}_2$ that satisfy Equations~\eqref{equation:re_contact_multiple_1} to \eqref{equation:re_contact_multiple_5} lie on the boundary of objects $F$ and $G$ respectively. 
\item The objects can not intersect with other.
\end{enumerate}
\end{definition}
}

\begin{proposition}
\label{pro:1}
Suppose the contact patch between object $F$ and object $G$ lies on a plane, i.e., the contact patch is planar. Then, by using the convex hull of the object $F$ to formulate the contact constraints, we ensure that we compute the ECP within the convex hull of the contact patch.
\end{proposition}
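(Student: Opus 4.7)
The plan is to combine a supporting-hyperplane argument with the KKT interpretation of the contact constraints in Equations~\eqref{equation:re_contact_multiple_1}$\sim$\eqref{equation:re_contact_multiple_5}. First I would identify the plane $\mathcal{P}$ that contains the contact patch $\mathcal{S}$ and observe that $\mathcal{P}$ is a supporting hyperplane to $G$ (by convexity of $G$) and to $F$ (because planar contact forces $F$ to lie in the opposite closed half-space); since half-spaces are convex, $\mathcal{P}$ also supports $Conv(F)$.

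The heart of the argument is the geometric lemma $Conv(F) \cap \mathcal{P} = Conv(F \cap \mathcal{P})$. I would prove this by writing any $\bf{x} \in Conv(F) \cap \mathcal{P}$ as a convex combination $\bf{x} = \sum_i \alpha_i \bf{y}_i$ with $\bf{y}_i \in F$, and using the supporting-hyperplane inequalities $\bf{n}^T \bf{y}_i \ge c$ together with $\bf{n}^T \bf{x} = c$ to force $\bf{n}^T \bf{y}_i = c$, and hence $\bf{y}_i \in F \cap \mathcal{P}$, for each $i$ with $\alpha_i > 0$. This is a short calculation but is the nontrivial geometric content of the proposition.

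With the lemma in hand, I would read off the result by interpreting Equations~\eqref{equation:re_contact_multiple_1}$\sim$\eqref{equation:re_contact_multiple_5} as the KKT conditions of the closest-point problem~\eqref{equation:optimazation} between $Conv(F)$ and $G$. When contact is established, Equation~\eqref{equation:re_contact_multiple_5} forces $\mathop{max}_j g_j(\bf{a}_1) = 0$ and the stationarity conditions place $\bf{a}_1 = \bf{a}_2$ in $Conv(F) \cap G$. Because $Conv(F)$ and $G$ live on opposite sides of the common supporting plane $\mathcal{P}$, their intersection sits inside $\mathcal{P}$; hence $\bf{a}_1 \in Conv(F) \cap \mathcal{P} = Conv(F \cap \mathcal{P})$, and under the paper's setting that $G$ is a plane or a face much larger than $F$'s footprint, $F \cap \mathcal{P}$ equals $\mathcal{S}$, giving $\bf{a}_1 \in Conv(\mathcal{S})$.

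The step I expect to be most delicate is pinning down why the ECP produced by~\eqref{equation:optimazation} must lie on $\mathcal{P}$ rather than merely in $Conv(F)$, and symmetrically why the final identification $F \cap \mathcal{P} = \mathcal{S}$ is legitimate. Both hinge on the two convex bodies sharing $\mathcal{P}$ as a common supporting hyperplane while occupying opposite half-spaces, which is precisely the information encoded in the planar-contact hypothesis plus the standing assumption on the size of $G$'s contact face. Once this geometric setup is firmly in place, the supporting-hyperplane lemma closes the proof in one line.
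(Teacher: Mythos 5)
Your proof is correct, and its central step is genuinely different from the paper's. The paper reduces the claim to showing that $C_2 = \partial Conv(F)\cap\mathcal{H}$ equals $Conv(C_1)$ where $C_1 = \mathcal{E}(F)\cap\mathcal{H}$ is the physical contact patch, and it argues via the set of extreme points of $F$: it establishes $C_1\subseteq C_2$ and then shows by contradiction that $(C_2\setminus C_1)\cap F=\emptyset$, from which it asserts $C_2=Conv(C_1)$. You instead prove the identity $Conv(F)\cap\mathcal{P}=Conv(F\cap\mathcal{P})$ directly, by writing a point of $Conv(F)\cap\mathcal{P}$ as a convex combination $\sum_i\alpha_i\mathbf{y}_i$ of points of $F$ and using the supporting-hyperplane inequality $\mathbf{n}^T\mathbf{y}_i\ge c$ together with $\mathbf{n}^T\mathbf{x}=c$ to force every $\mathbf{y}_i$ with $\alpha_i>0$ onto $\mathcal{P}$. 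This buys you something concrete: the inclusion $C_2\subseteq Conv(C_1)$ is the nontrivial half of the paper's claim, and the paper's stated facts ($C_1\subseteq C_2$ plus $(C_2\setminus C_1)\cap F=\emptyset$) do not by themselves imply it --- knowing that the extra points of $C_2$ avoid $F$ does not show they are convex combinations of points of $C_1$. Your convex-combination calculation supplies exactly that missing implication, so your route is tighter where the paper is loosest. The two proofs agree on the framing (the contact plane as a common supporting hyperplane of $Conv(F)$ and $G$, and the KKT system \eqref{equation:re_contact_multiple_1}--\eqref{equation:re_contact_multiple_5} placing $\mathbf{a}_1=\mathbf{a}_2$ in $Conv(F)\cap G\subseteq\mathcal{P}$), and you correctly flag that identifying $F\cap\mathcal{P}$ with the contact patch $\mathcal{S}$ uses the standing assumption that $G$ is a plane or a face large relative to $F$, which the paper handles separately in its remark following the proposition.
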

\begin{proof}

Let $\partial F$ and $\partial Conv(F)$ be the boundaries of the object $F$ and the convex hull of object $F$ respectively. A point $\bf{x} \in$ $\partial F$ is called an extreme point of $F$, if and only if there exists a plane passing through $\bf{x}$, such that all points in $F$ lie on one side of $\mathcal{H}$. Let $\mathcal{E}(F)$ be the set of extreme points of $F$. For a convex set, the set of its extreme points are same as its boundary. Thus, the set of extreme points of $Conv(F)$ is $\partial Conv(F)$.  Furthermore, from the properties of convex hulls, $Conv(F)$ contains the set of all extreme points of $F$, i.e., $\mathcal{E}(F) \subseteq \partial Conv(F)$. For a non-convex object contacting with a plane, the set of extreme points are the only points that can potentially contact the plane. Therefore, using the convex hull description ensures that we are capturing the set of all boundary points of $F$ that can be in contact.

All the possible planar contact regions are subsets of $\mathcal{E}(F)$ that lie on a plane. Let $C_1 \subset \mathcal{E}(F)$ be a planar contact patch between object $F$ and $G$. Let $C_2 \subset \partial Conv(F)$ be a planar contact patch between the convex hull of object $F$ and $G$. Note that $C_2$ is always a convex set, since both $Conv(F)$ and $G$ are convex sets. Our goal is to prove that $C_2$ is the convex hull of $C_1$, i.e., $C_2 = Conv(C_1)$. 

Let $\mathcal{H}(\bf{x}) = 0$ be the plane of the contact region. This plane is also a supporting plane for $F$ and $G$. With abuse of notation, $\mathcal{H} = \{{\bf x}| \mathcal{H}({\bf x}) = 0 \}$, i.e., $\mathcal{H}$ is the set of all points lying on the supporting plane.  Now, we can write $C_1 = \mathcal{E}(F) \cap \mathcal{H}$ and $C_2 =  \partial Conv(F) \cap \mathcal{H}$.  Since, $\mathcal{E}(F) \subseteq \partial Conv(F)$, we can conclude that $C_1 \subseteq C_2$. In words, the planar contact patch on object $F$ is a subset of the planar contact patch formed with the convex hull of $F$.

If $C_1 = C_2$, then $C_1$ is a convex set and thus $Conv(C_1) = C_1 = C_2$. If $C_1 \subset C_2$, i.e., there are points in $C_2$ that do not lie in $C_1$, we have to show that these points do not belong to $F$, i.e., $\{C_2 \setminus C_1\} \cap F = \phi$, where $\phi$ denotes the empty set. Since $\mathcal{E}(F)$ are the only points where $F$ can intersect $\mathcal{H}$, therefore it suffices to show that $\{C_2 \setminus C_1\} \cap \mathcal{E}(F) = \phi$.  We will prove this by contradiction. Assume that ${\bf y} \in C_2$, ${\bf y} \notin C_1$ and ${\bf y} \in \mathcal{E}(F)$. Since ${\bf y} \in C_2$, it implies that ${\bf y} \in \mathcal{H}$. Since $C_1 = \mathcal{E}(F) \cap \mathcal{H}$ and ${\bf y} \in \mathcal{H}$, ${\bf y} \notin C_1$ implies ${\bf y} \notin \mathcal{E}(F)$. But by assumption, ${\bf y} \in \mathcal{E}(F)$, which leads to a contradiction. Thus, $\{C_2 \setminus C_1\} \cap \mathcal{E}(F) = \phi$.

Since $C_1 \subset C_2$ and points in $C_2$ that do not belong to $C_1$ does not belong to $F$, therefore $C_2 = Conv(C_1)$. Furthermore, in Equations~\eqref{equation:re_contact_multiple_1}$\sim$~\eqref{equation:re_contact_multiple_5}, the ECP will lie in $C_2$. Therefore, for planar contact, the ECP computed our contact constraints in Equations~\eqref{equation:re_contact_multiple_1}$\sim$~\eqref{equation:re_contact_multiple_5} based on the convex hull of $F$ will lie in the convex hull of the contact patch of object $F$ with $G$.



\end{proof}

\begin{remark}
As stated earlier, when the distance between the convex hull of $F$ and $G$ is zero, but the distance between $F$ and $G$ is non-zero, we need to perform additional computational checks. At the end of each time step we can check to see if the closest point on object $G$, i.e., ${\bf a}_2$ is outside or on the object $F$. As $F$ is described by a set of inequalities, if any one of the inequalities evaluated at ${\bf a}_2$ is positive, it would imply that the object $G$ and $F$ are separate. Note that when object $G$ is a plane, this additional step is not necessary as the distance between the convex hull of $F$ and a plane is zero implies that the distance between $F$ and the plane is zero.
\end{remark}

\begin{proposition}
\label{pro:2}
{When using Equations~\eqref{equation:re_contact_multiple_1} to~\eqref{equation:re_contact_multiple_5} to model the contact between convex hulls for two objects, we get the solution for ECPs as the closest points on the boundary of convex hulls respectively when objects are separate. When objects have planar contact, we will get touching solution which prevents penetration. }
\end{proposition}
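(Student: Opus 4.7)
The plan is to split the argument by cases---disjoint versus intersecting convex hulls $Conv(F)$ and $G$---and exploit the fact that Equations~\eqref{equation:re_contact_multiple_1}--\eqref{equation:re_contact_multiple_4} are the KKT optimality conditions of the convex distance-minimization program~\eqref{equation:optimazation}. Because~\eqref{equation:optimazation} minimizes a convex objective over a convex feasible set, these KKT conditions are both necessary and sufficient for optimality, so any $({\bf a}_1,{\bf a}_2)$ satisfying~\eqref{equation:re_contact_multiple_1}--\eqref{equation:re_contact_multiple_4} is a global minimizer of~\eqref{equation:optimazation}, i.e., attains the Euclidean distance between $Conv(F)$ and $G$.

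In the separation case, this minimum distance is strictly positive, so ${\bf a}_1 \neq {\bf a}_2$ and, from~\eqref{equation:re_contact_multiple_1}, $l_k > 0$ (here $\nabla\mathcal{C}$ is a non-zero outward normal to $Conv(F)$, being a non-negative conic combination of boundary-gradient terms by~\eqref{equation:re_contact_multiple_3}). The complementarity conditions~\eqref{equation:re_contact_multiple_3}--\eqref{equation:re_contact_multiple_4} then activate the corresponding constraints, placing ${\bf a}_1$ on $\partial Conv(F)$ and ${\bf a}_2$ on $\partial G$, which are the closest points. For~\eqref{equation:re_contact_multiple_5}, disjointness forces ${\bf a}_1$ to lie strictly outside $G$, so $\max_{j} g_{j}({\bf a}_1) > 0$ and hence $\lambda_n = 0$; no spurious contact impulse arises.

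In the planar contact case, $Conv(F) \cap G$ is non-empty and the minimum of~\eqref{equation:optimazation} equals zero, attained exactly by pairs with ${\bf a}_1 = {\bf a}_2 \in \partial Conv(F) \cap \partial G$. Substituting ${\bf a}_1 = {\bf a}_2$ into~\eqref{equation:re_contact_multiple_1} forces $l_k = 0$; the remaining relations~\eqref{equation:re_contact_multiple_2}--\eqref{equation:re_contact_multiple_4} still hold and express that the outward normal of $\partial Conv(F)$ is antiparallel to a conic combination of outward normals of $\partial G$ at the common point, i.e., the boundaries are tangent, consistent with planar contact. Since ${\bf a}_1 \in \partial G$, $\max_{j} g_{j}({\bf a}_1) = 0$, so~\eqref{equation:re_contact_multiple_5} admits $\lambda_n \geq 0$. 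Applying Proposition~\ref{pro:1} locates this common ECP in the convex hull of the actual planar contact patch on $F$; the feasibility side of~\eqref{equation:re_contact_multiple_5} together with ${\bf a}_1 = {\bf a}_2 \in \partial G$ is precisely the non-penetration condition $\psi_{n}({\bf q},t) \ge 0$ of~\eqref{equation:normal contact}, so the solids do not interpenetrate.

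The step I expect to be most delicate is ruling out spurious configurations with ${\bf a}_1 \ne {\bf a}_2$ when the two hulls already intersect in a patch: the remedy is to invoke convexity of~\eqref{equation:optimazation} so that every KKT pair attains the global minimum---which here is zero---forcing ${\bf a}_1 = {\bf a}_2$. A related subtlety is that a planar contact admits a continuum of KKT-feasible touching ECPs (any point of $Conv(F) \cap G$ is admissible for~\eqref{equation:re_contact_multiple_1}--\eqref{equation:re_contact_multiple_5}); the specific ECP is selected only when the system is coupled with the Newton--Euler equations and the friction law, but Proposition~\ref{pro:2} only asserts the existence of a touching, non-penetrating solution, so this uniqueness question lies outside the proof.
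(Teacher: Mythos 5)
Your separation case and your boundary-placement argument line up with the paper's (the paper handles the separate case by citing~\cite{NilanjanChakraborty2007}, and proves boundary placement by the same complementarity contradictions you invoke). The genuine gap is in the touching case, at the step where you conclude non-penetration. You derive ${\bf a}_1={\bf a}_2\in\partial Conv(F)\cap\partial G$ and then assert that the feasibility side of~\eqref{equation:re_contact_multiple_5}, i.e.\ $\max_{j} g_{j}({\bf a}_{1})=0$, ``is precisely the non-penetration condition.'' It is not: that condition only certifies that the single point ${\bf a}_1$ does not lie in the interior of $G$; it says nothing about whether \emph{other} points of $Conv(F)$ penetrate $G$. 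Two overlapping convex bodies also share boundary points, so ``${\bf a}_1={\bf a}_2$ on both boundaries'' does not by itself exclude interpenetration. The substantive content of the proposition is that Equations~\eqref{equation:re_contact_multiple_1}--\eqref{equation:re_contact_multiple_5} admit \emph{no} solution in a penetrating configuration, and the paper closes this with a separating-hyperplane argument that you only gesture at: taking $\bm{\alpha}\in\mathcal{C}(Conv(F),{\bf a}_1)$, Equation~\eqref{equation:re_contact_multiple_2} forces $-\bm{\alpha}$ into the normal cone of $G$ at ${\bf a}_2$, so convexity places $Conv(F)$ in the halfspace $\{\bm{\alpha}^T({\bf x}-{\bf a}_{1})\le 0\}$ and $G$ in $\{\bm{\alpha}^T({\bf x}-{\bf a}_{1})\ge 0\}$; the plane $\mathcal{H}$ separates the two hulls and hence their interiors are disjoint. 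You state the antiparallel-normal fact but stop at ``the boundaries are tangent'' without drawing the halfspace conclusion, which is the actual non-penetration proof.

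A related soft spot is your reliance on ``the KKT conditions of the convex program~\eqref{equation:optimazation} are necessary and sufficient, hence any solution attains the global minimum.'' Equations~\eqref{equation:re_contact_multiple_1}--\eqref{equation:re_contact_multiple_4} are a \emph{modification} of the KKT conditions; the raw KKT system of~\eqref{equation:optimazation} degenerates when ${\bf a}_1={\bf a}_2$ (the objective is nondifferentiable there, and a squared-distance reformulation has vanishing gradient, so any common point --- including one interior to both sets --- would satisfy stationarity). It is exactly the normalized form of~\eqref{equation:re_contact_multiple_2} that keeps the normal-alignment information alive in the touching case and makes the separating-hyperplane argument possible. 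Your case split also quietly assumes the configuration (separate versus planar contact) and verifies consistency with the equations, whereas the proposition needs the converse direction: the constraints, solved jointly with the dynamics for the unknown end-of-step configuration, must rule penetrating configurations out.
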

\begin{proof}
The proof idea follows from the arguments of the proof shown in~\cite{NilanjanChakraborty2007} and~\cite{XieC16}, with modifications done to consider the convex hull of $F$ instead of $F$.

When objects are separate, Equations~\eqref{equation:re_contact_multiple_1} $\sim$~\eqref{equation:re_contact_multiple_5} will give us the solution for $\bf{a}_1$ and $\bf{a}_2$ as the closet points on the boundary of $Conv(F)$ and $G$ respectively. The proof is same as in~\cite{NilanjanChakraborty2007}. 

When the distance between two objects is zero, the modified KKT conditions~\eqref{equation:re_contact_multiple_1} to~\eqref{equation:re_contact_multiple_4} will give us the optimal solution for the minimization problem in Equation~\eqref{equation:optimazation}, i.e., ${\bf a}_1 = {\bf a}_2$. Furthermore, Equations~\eqref{equation:re_contact_multiple_1} to~\eqref{equation:re_contact_multiple_4} and Equation~\eqref{equation:re_contact_multiple_5} together give us the solution for ${\bf a}_1$ and ${\bf a}_2$ as the touching solution for planar contact, i.e.,:
\begin{enumerate}
\item The points ${\bf a}_1$ and ${\bf a}_2$ that satisfy Equations~\eqref{equation:re_contact_multiple_1} to \eqref{equation:re_contact_multiple_5} lie on the boundary of the convex hull $Conv(F)$ and $G$ respectively. 
\item The interior of the set $Conv(F)$ cannot intersect with the interior of the set $G$.
\end{enumerate}

We prove the first part by contradiction. If ${\bf a}_1$ lies within the interior of $Conv(F)$, then from Equation~\eqref{equation:re_contact_multiple_3}, $f_i({\bf a}_1) < 0,\  l_i = 0, \ \forall i = 1,...,m$. From Equation~\eqref{equation:re_contact_multiple_1}, ${\bf a}_1 = {\bf a}_2$, thus $f_i({\bf a}_2) < 0, \ \forall i = 1,...,m$, which contradicts with Equation~\eqref{equation:re_contact_multiple_5}. Thus ${\bf a}_1$ has to lie on the boundary of $Conv(F)$. If ${\bf a}_2$ lies within object $G$, from Equation~\eqref{equation:re_contact_multiple_4}, $g_j({\bf a}_2) < 0$, $l_j = 0,  \ \forall j = m+1,...,n$. Thus, $\sum_{j = m+1}^n l_j \nabla g_j ({\bf a}_2) = 0$. Since the left hand side of Equation~\eqref{equation:re_contact_multiple_2} is nonzero, this leads to a contradiction. Thus ${\bf a}_2$ lies on the boundary of object $G$. 

We will now prove that the interior of $Conv(F)$ and $G$ are disjoint. We prove it based on the supporting hyperplane theorem. Let $\mathcal{H}$ be the supporting plane to $Conv(F)$ at the point ${\bf{a}_1} \in \partial Conv(F)$, where the normal ${\bm \alpha} \in \mathcal{C}(\bf{F}_i,\bf{a}_{1}) $. 
The supporting plane is given by $\mathcal{H} = \{{\bf x} | \bm{ \alpha}^T({\bf x}-{\bf a}_{1})= 0 \}$. Since the plane $\mathcal{H}$ supports $ Conv(F)$ at ${\bf a}_{1}$,  for all points ${\bf x} \in  Conv(F)$, the affine function $\bm{ \alpha}^T({\bf x}-{\bf a}_{1})\le 0$. In other words, the halfspace $\{{\bf x} | \bm{\alpha}^T({\bf x}-{\bf a}_{1})\le 0 \}$ contains $Conv(F)$.  Now we need to prove that the halfspace $\{{\bf x} | \bm{\alpha}^T({\bf x}-{\bf a}_{1}) \ge 0 \}$ contains object $G$, which would imply that objects $Conv(F)$ and $G$ can be separated by $\mathcal{H}$. For point ${\bf a}_2 \in \partial G$, since ${\bf a}_1 = {\bf a}_2$, ${\bf a}_2$ lies in $\mathcal{H}$. For other points ${\bf y} \in \{G \setminus {\bf a}_2\}$, we have $\bm{\alpha}^T({\bf y}-{\bf a}_{1}) = \bm{ \alpha}^T({\bf y}- {\bf a}_{2}+{\bf a}_{2} -{\bf a}_{1}) = \bm{ \alpha}^T({\bf y}-{\bf a}_{2})$. From Equation~\eqref{equation:re_contact_multiple_2}, the direction of normal $\bm{\alpha}$ is  opposite to the normal cone of $G$ at ${\bf a}_2$. Since object $G$ is convex, the projection of the vector ${\bf y}-{\bf a}_{2}$ onto the normal cone at ${\bf a}_2$ is always non-positive. Therefore, the function ${\bm \alpha}^T({\bf y}-{\bf a}_{2})$ is always non-negative. Thus, the halfspace $\{{\bf x} | {\bm \alpha}^T({\bf x}-\bf{a}_{1})\ge 0 \}$ contains object $G$. Thus, we can conclude that the interior of $Conv(F)$ and $G$ are disjoint.

\end{proof}



\comment{
\subsubsection{Friction model for multiple patches}
When the planar contact patch between objects is a union of patches (each individual patch can be either convex or non-convex), the effect of each individual contact can be modeled as point contact based on ECP $\bm{a}_{1_i}$ or $\bm{a}_{2_i}$. In (), we present a modified friction model for $i$th contact:
\begin{equation}
\begin{aligned}
\label{equ:mod_friction}
{\rm max} \quad -(v_t \lambda_{t_i} + v_o\lambda_{o_i} + v_r \lambda_{rE_i})\\
{\rm s.t.} \quad \left(\frac{\lambda_{t_i}}{e_{t_i}}\right)^2 + \left(\frac{\lambda_{o_i}}{e_{o_i}}\right)^2+\left(\frac{\lambda_{rE_i}}{e_{r_i}}\right)^2 - \mu_i^2 \lambda^2_{n_i} \le 0
\end{aligned}
\end{equation}

We can view the effect of union of patches as the effect of a single contact patch. Let $\bm{a}_{1}$ or $\bm{a}_{2}$ be the ECPs for the single patch. $v_t$, $v_o$, $v_r$ are the components of relative velocity of $\bm{a}_{1}$ or $\bm{a}_{2}$. $\lambda_{rE_i}$ is the equivalent friction moment of  $i$th patch acting on $\bm{a}_1$ or $\bm{a}_2$. The coordinates of ECP $\bm{a}_1$ and $\bm{a}_2$ for the entire union are dependent on the normal contact force $\lambda_{n_i}$ and $\bm{a}_{1_i}, \bm{a}_{2_i}$ for each contact:
\begin{equation}
a_{1(.)} = \frac{\sum \lambda_{n_i}a_{1(.)_i}}{\sum \lambda_{n_i}}, \quad a_{2(.)} = \frac{\sum \lambda_{n_i}a_{2(.)_i}}{\sum \lambda_{n_i}}
\end{equation}
where the notation $(.)$ could be coordinate of $x$, $y$ or $z$ axis. 
The equivalent friction moment $\lambda_{rE_i}$ can be defined as:
\begin{equation}
\lambda_{rE_i} = \lambda_{r_i} + \lambda_{o_i}(a_{1x_i} - a_{1x}) - \lambda_{t_i}(a_{1y_i} - a_{1y})
\end{equation}

The alternative formulation for modified friction model (Equation~\eqref{equ:mod_friction}) is:
 \begin{equation}
\begin{aligned}
\label{mod_friction}
0&=
e^{2}_{t_i}\mu_i \lambda_{n_i} 
\bm{W}^{T}_{t}\cdot
\bm{\nu}+
\lambda_{t_i}\sigma_i\\
0&=
e^{2}_{o_i}\mu_i \lambda_{n_i}  
\bm{W}^{T}_{o}\cdot
\bm{\nu}+\lambda_{o_i}\sigma_i\\
0&=
e^{2}_{r_i}\mu_i \lambda_{n_i}\bm{W}^{T}_{r}\cdot
\bm{\nu}+\lambda_{r_i}\sigma_i\\
\end{aligned}
\end{equation}
\begin{equation}
\label{equation:mod_friction_c}
0 \le \mu_i^2\lambda_{n_i}^2- \lambda_{t_i}^2/e^{2}_{t_i}- \lambda_{o_i}^2/e^{2}_{o_i}- \lambda_{rE_i}^2/e^{2}_{r_i} \perp \sigma_i \ge 0
\end{equation}

Note that when number of individual contact patch is equal to one, the modified friction model for multiple patches degenerates to the friction model for single patch. 

We choose this friction model for solving planar contact problem with convex set.
}

\subsection{Summary of the discrete-time dynamic model}
As stated earlier, our dynamic model is composed of (a) Newton-Euler equations (Equation~\eqref{equ:discrete_NE}), (b) kinematic map between the rigid body generalized velocity and the rate of change of the parameters for representing position and orientation (Equation~\eqref{equ:discrete_KE}), (c) contact model  which gives the constraints that the equivalent contact points ${\bf a}_{1}$ and ${\bf a}_{2}$ should satisfy for ensuring no penetration between the objects (Equations~\eqref{equation:re_contact_multiple_1}$\sim$~\eqref{equation:re_contact_multiple_5}). (d) friction model  which gives the constraints that contact wrenches should satisfy (Equation~\eqref{eq:friction}). Thus, we have a coupled system of algebraic and complementarity equations (mixed nonlinear complementarity problem) that we have to solve. The vector of unknowns, ${\bf z} = [{\bf z}_u;{\bf z}_v]$ where the vector for unknowns of equality constraints is ${\bf  z}_u = [\bm{\nu}^{u+1}; {\bf a}_1^{u+1}; {\bf a}_2^{u+1}; p_t^{u+1}; p_o^{u+1}; p_r^{u+1}]$ and the vector for unknowns of complementary constraints is ${\bf z}_v = [{\bf l}_1;{\bf l}_2;p_n^{u+1};\sigma^{u+1}]$.
The equality constraints in the mixed NCP are:
\begin{equation}
\begin{aligned}
0 &= -{\bf M}^{u} {\bm{\nu}}^{u+1} +
{\bf M}^{u}{\bm{\nu}}^{u}+{\bf W}_{n}^{u+1}p^{u+1}_{n}+{\bf W}_{t}^{u+1}p^{u+1}_{t} \\&
+{\bf W}_{o}^{u+1}p^{u+1}_{o}+{\bf W}_{r}^{u+1}p^{u+1}_{r}+ {\bf p}^{u}_{app}+{\bf p}^{u}_{vp}\\
0& = {\bf a}^{u+1}_1-{\bf a}^{u+1}_2+l^{u+1}_{k_1}(\nabla f_{k_1}({\bf a}^{u+1}_1)+\sum_{i = 1,i\neq k_1}^m l^{u+1}_i\nabla f_i({\bf a}^{u+1}_1))\\
0&= \nabla f_{k_1}({\bf a}^{u+1}_1)+\sum_{i = 1,i\neq k_1}^m l^{u+1}_i\nabla f_i({\bf a}^{u+1}_1)+\sum_{j = m+1}^n l^{u+1}_j \nabla g_j ({\bf a}^{u+1}_2)\\
0&=
e^{2}_{t}\mu p^{u+1}_{n} 
({\bf W}^{T}_{t})^{u+1}
\bm{\nu}^{u+1}+
p^{u+1}_{t}\sigma^{u+1}\\
0&=
e^{2}_{o}\mu p^{u+1}_{n}  
({\bf W}^{T}_{o})^{u+1}
\bm{\nu}^{u+1}+p^{u+1}_{o}\sigma^{u+1}\\
0&=
e^{2}_{r}\mu p^{u+1}_{n}({\bf W}^{T}_{r})^{u+1}
\bm{\nu}^{u+1}+p^{u+1}_{r}\sigma^{u+1}
\end{aligned}
\end{equation}

The complementary constraints for $\bm{z}_v$ are:
\begin{equation}
\begin{aligned}
0 \le \left[\begin{matrix}
{\bf l}_1\\
{\bf l}_2\\
p^{u+1}_n\\
\sigma^{u+1}
\end{matrix}\right]\perp 
\left[\begin{matrix}
-{\bf f}({\bf a}^{u+1}_1)\\
-{\bf g}({\bf a}^{u+1}_2)\\
\max\limits_{i=1,...,m} f_i({\bf a}^{u+1}_2) \\
\zeta
\end{matrix}\right] \ge 0
\end{aligned}    
\end{equation}
where $\zeta =(\mu p^{u+1}_n)^2- (p^{u+1}_{t}/e_{t})^2- (p^{u+1}_{o}/e_{o})^2- (p^{u+1}_{r}/e_{r})^2 $.

\comment{
The solution of the MNCP for each time step gives us the linear and angular velocity ${\bm{\nu}}^{u+1}$. It also returns us contact impulses $p_{n_i}^{u+1}, p_{t_i}^{u+1}, p_{o_i}^{u+1}, p_{r_i}^{u+1}$, and equivalent contact points ${\bf a}_{1_i}^{u+1}, {\bf a}_{2_i}^{u+1}$ for $i$th contact $(i = 1,...,n_c)$, at the end of the time-step. The position and orientation of the object is obtained from Equation~\eqref{equ:discrete_KE}, after we obtain ${\bm{\nu}}^{u+1}$. 
}

\comment{
When objects have the planar contact, it can be viewed as contact between one or multiple pairs of bodies or parts that forming the objects. When the planar contact between objects is a non-convex set, there exists one or multiple bodies that is non-convex. Thus, the minimization problem in Equation~\eqref{equation:optimazation} would not be convex optimization problem anymore.}

\section{Simulation Results}
\label{se:simu}
\begin{figure*}[!htp]%
\begin{subfigure}{1.2\columnwidth}
\includegraphics[width=\columnwidth]{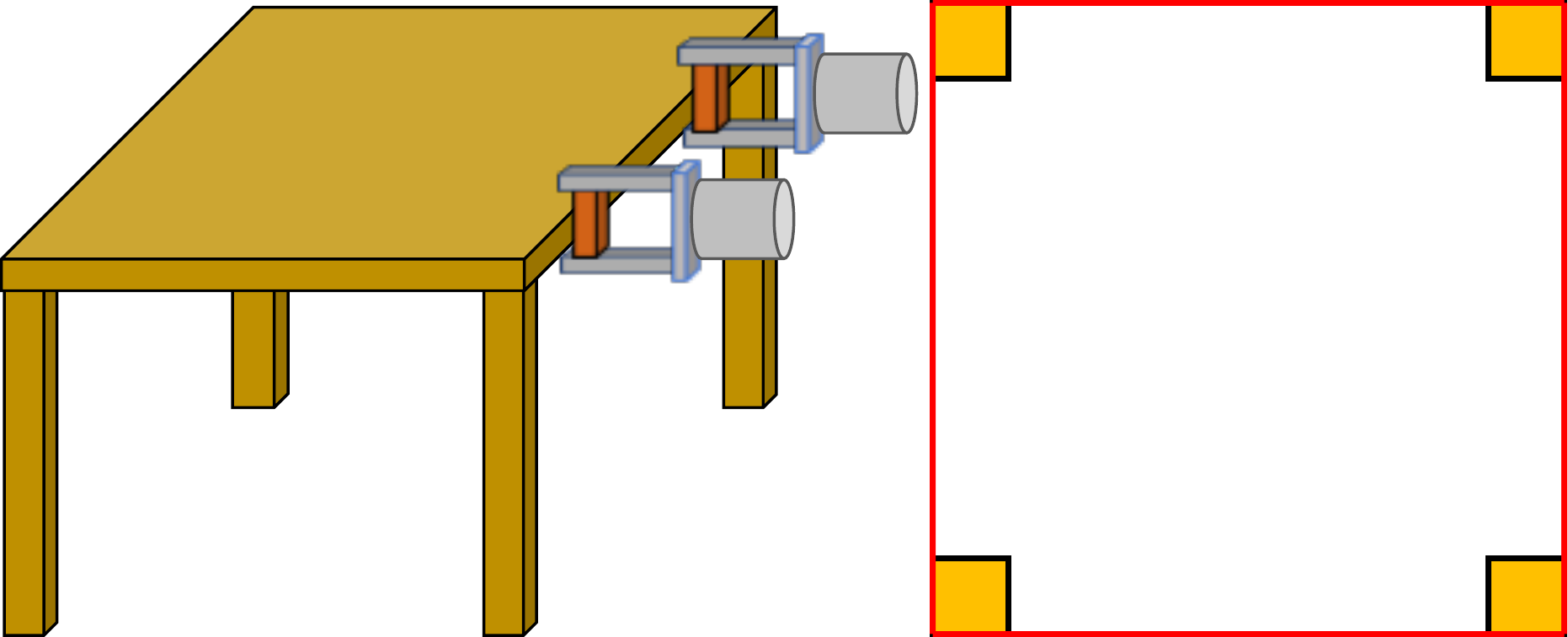}%
\caption{}
\label{figure:ex1_1} 
\end{subfigure}\hfill
\begin{subfigure}{0.8\columnwidth}
\includegraphics[width=\columnwidth]{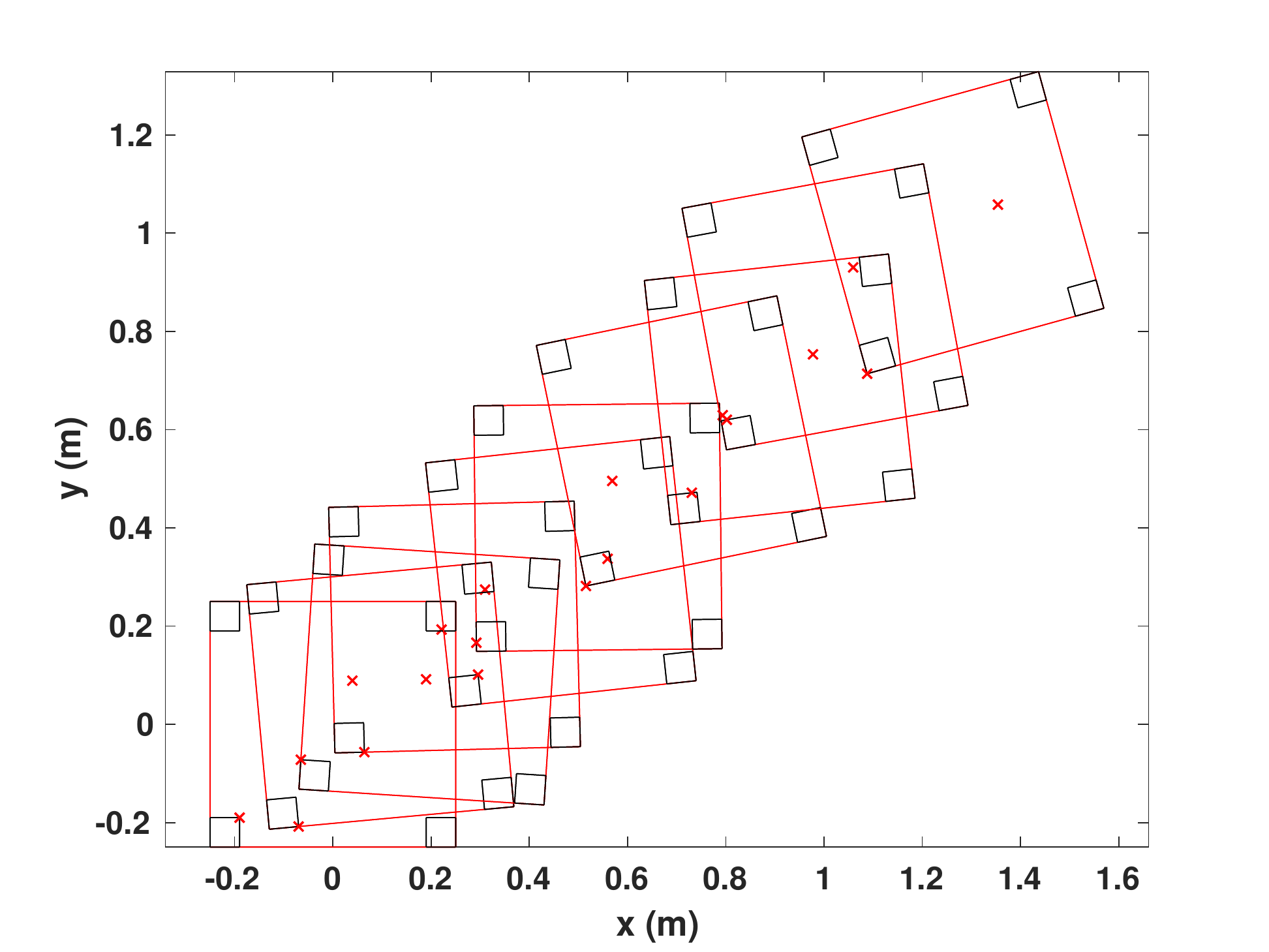}%
\caption{}
\label{figure:ex1_2} 
\end{subfigure}\hfill%
\begin{subfigure}{1\columnwidth}
\includegraphics[width=\columnwidth]{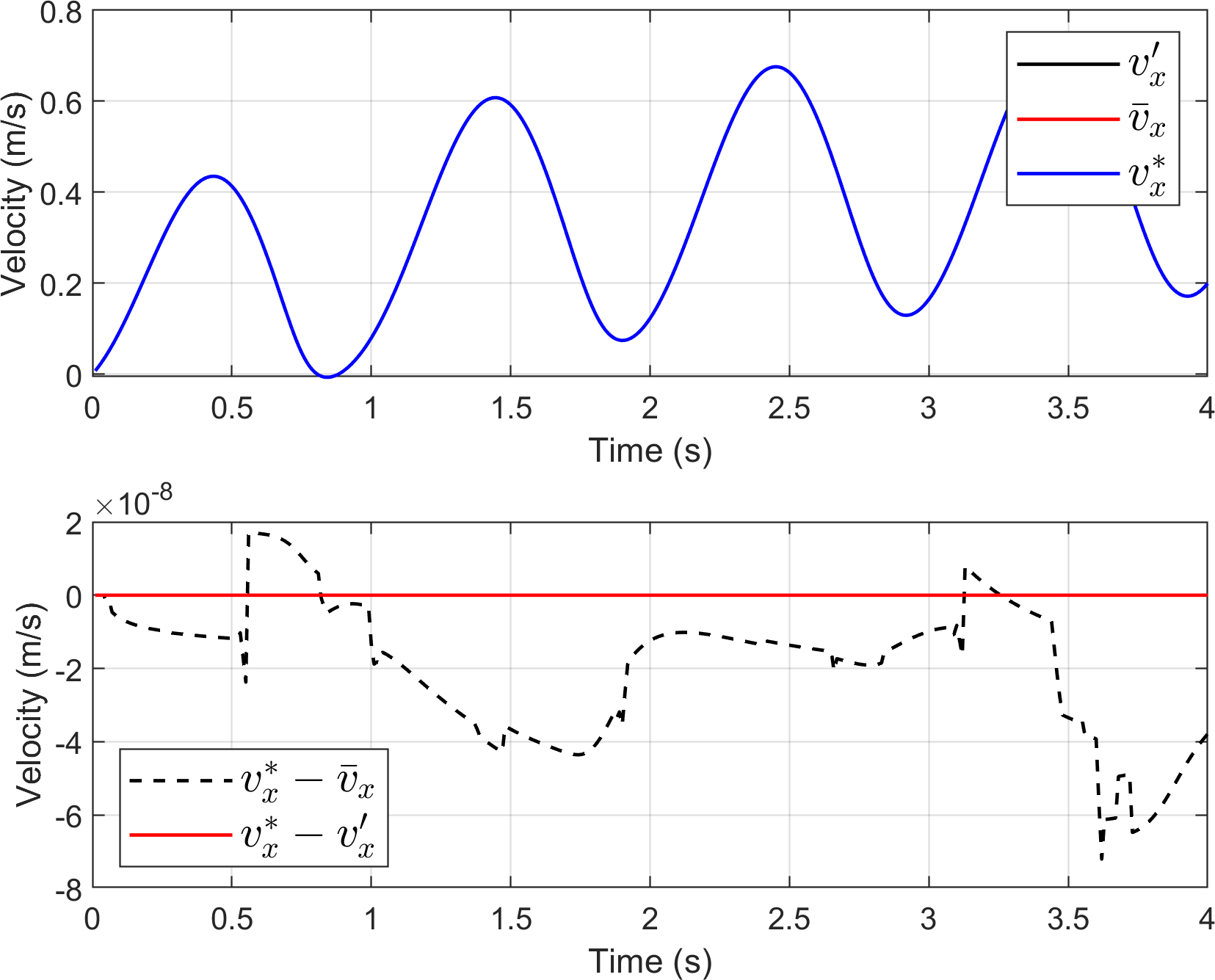}%
\caption{}
\label{figure:ex1_3} 
\end{subfigure}\hfill%
\begin{subfigure}{1\columnwidth}
\includegraphics[width=\columnwidth]{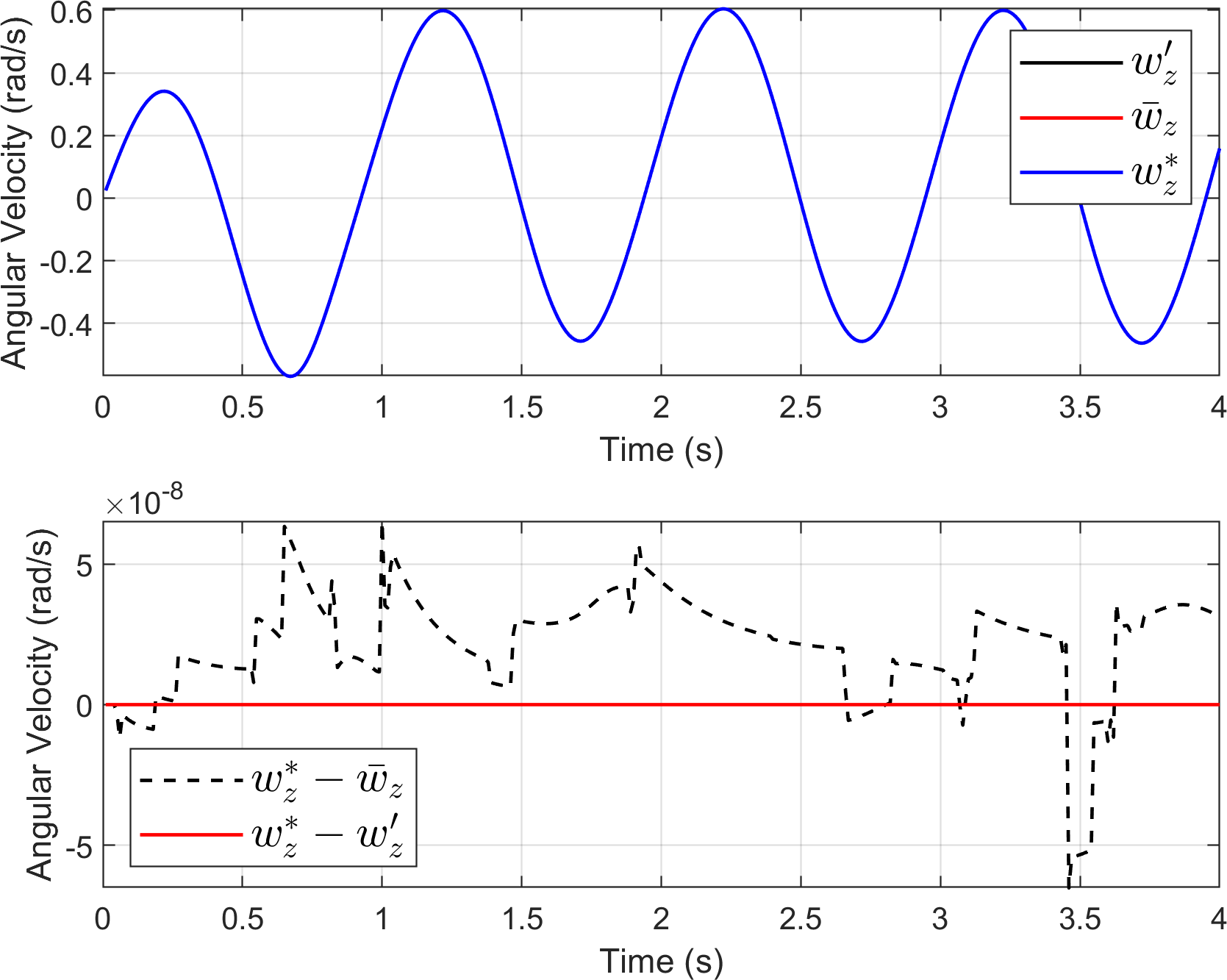}%
\caption{}
\label{figure:ex1_4} 
\end{subfigure}%
\caption{The simulation result for the scenario of pushing a desk with four legs. The figure in (a) shows a four-legged desk on a flat ground pushed by robotic grippers (left panel), and the contact between the desk feet and the ground is a union of four squares (right panel). The red square is the convex hull for the contact patch. The plot in (b) shows the snapshots of the contact patch between the feet of the desk and the ground during the motion. The red dot is the ECP, which changes during the motion, relative to the square convex hull of the contact patch. The plots in (c) (Top) illustrate the solution of translational velocity from~\cite{XieC18b} ($v^{\prime}_x$), from~\cite{XieC18a} ($\bar{v}_x$) and the proposed method ($v^*_x$), (Bottom) the difference between $v^*_x$ and $\bar{v}_x$, and difference between $v^*_x$ and $v^{\prime}_x$. In (d), we plot the (Top) The angular velocity $w^{\prime}_z$, $\bar{w}_z$ and $w^{*}_z$, (Bottom) and their differences. }
\label{Example1}
\end{figure*}
\begin{figure*}[!htp]%
\centering
\begin{subfigure}{1\columnwidth}
\includegraphics[width=\columnwidth]{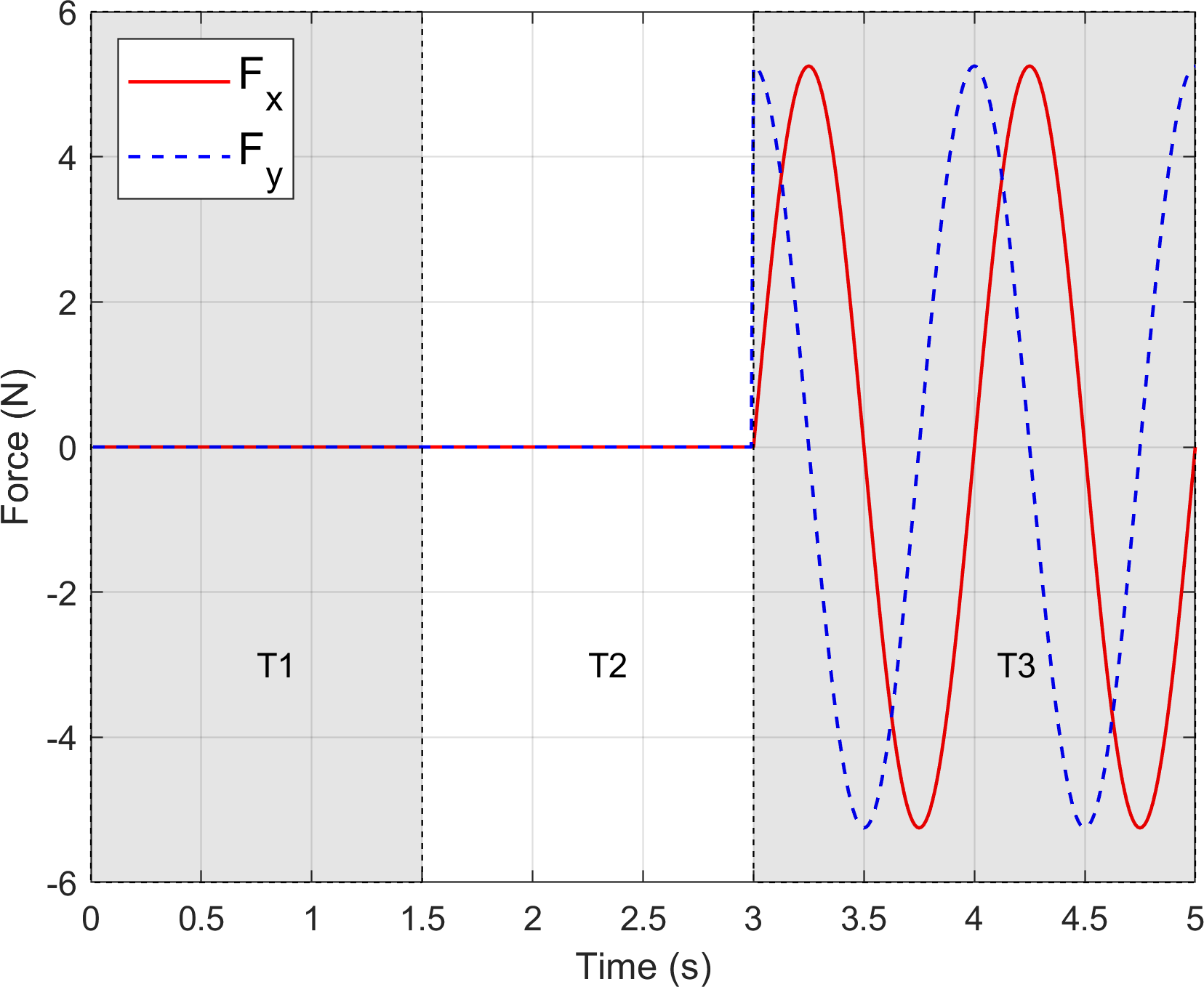}%
\caption{}
\label{figure:ex2_1} 
\end{subfigure}\hfill%
\begin{subfigure}{1\columnwidth}
\includegraphics[width=\columnwidth]{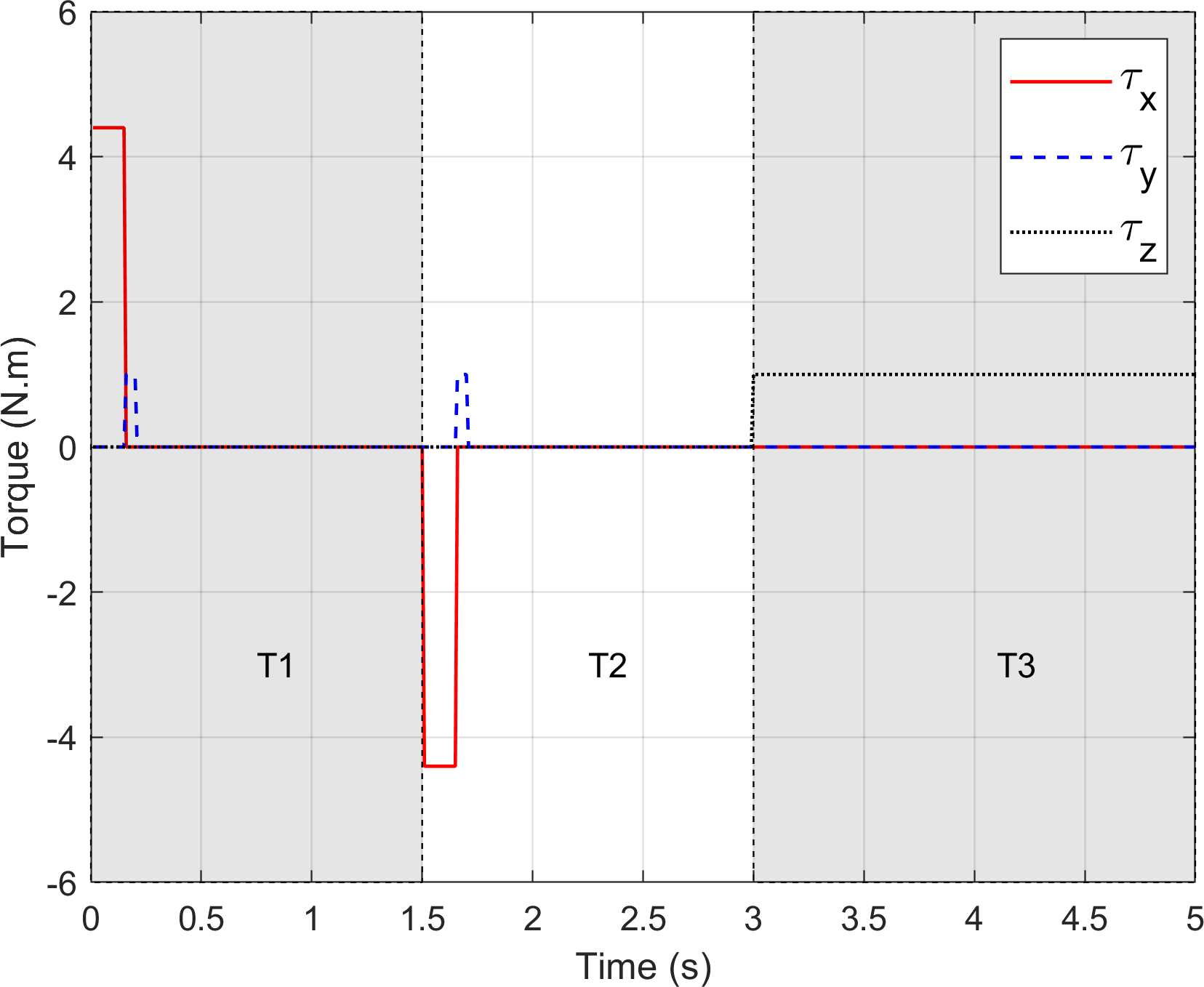}%
\caption{}
\label{figure:ex2_2} 
\end{subfigure}\hfill%
\begin{subfigure}{0.9\columnwidth}
\includegraphics[width=\columnwidth]{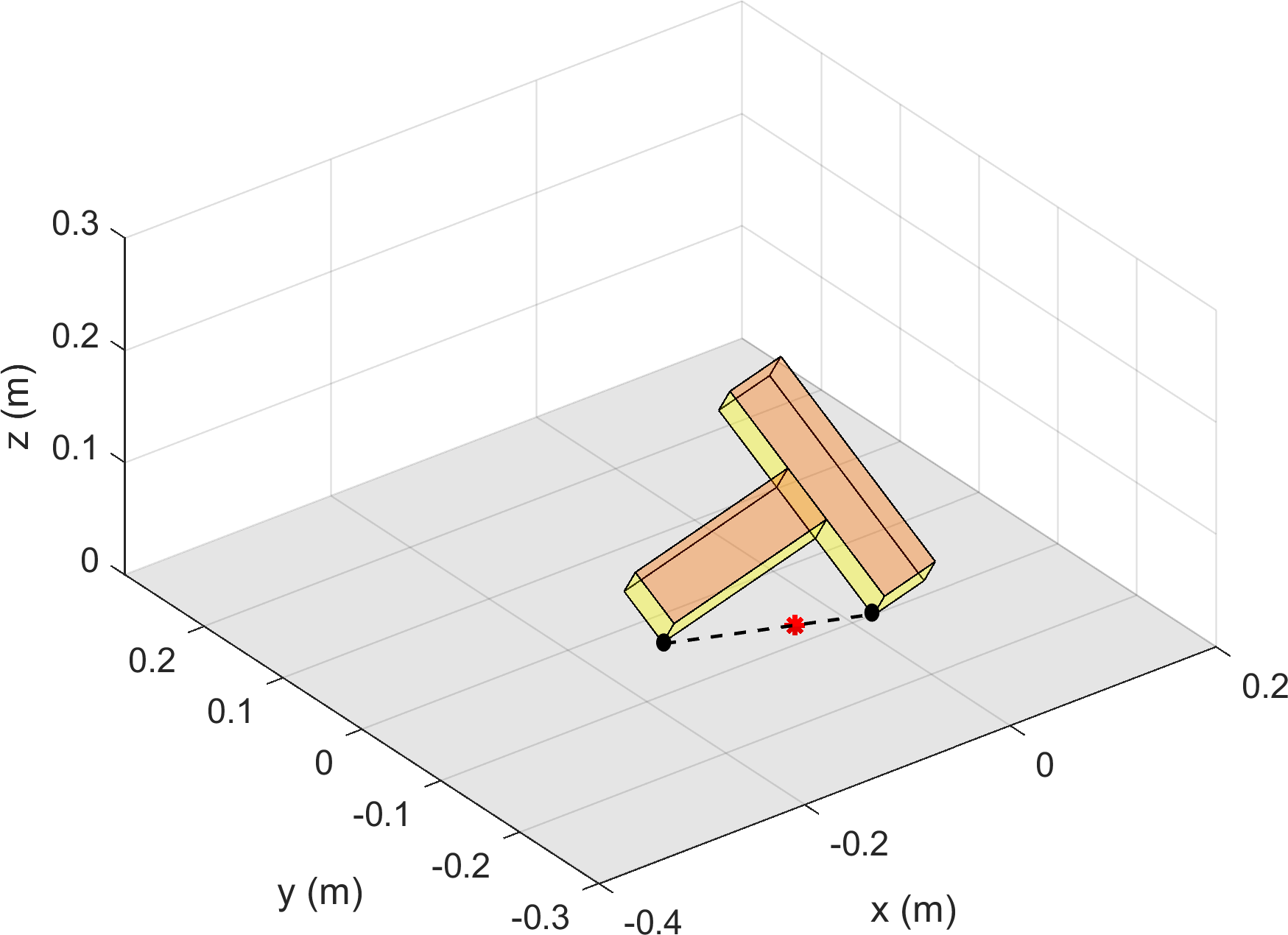}%
\caption{
}
\label{figure:ex2_3} 
\end{subfigure}\hfill%
\begin{subfigure}{1\columnwidth}
\includegraphics[width=\columnwidth]{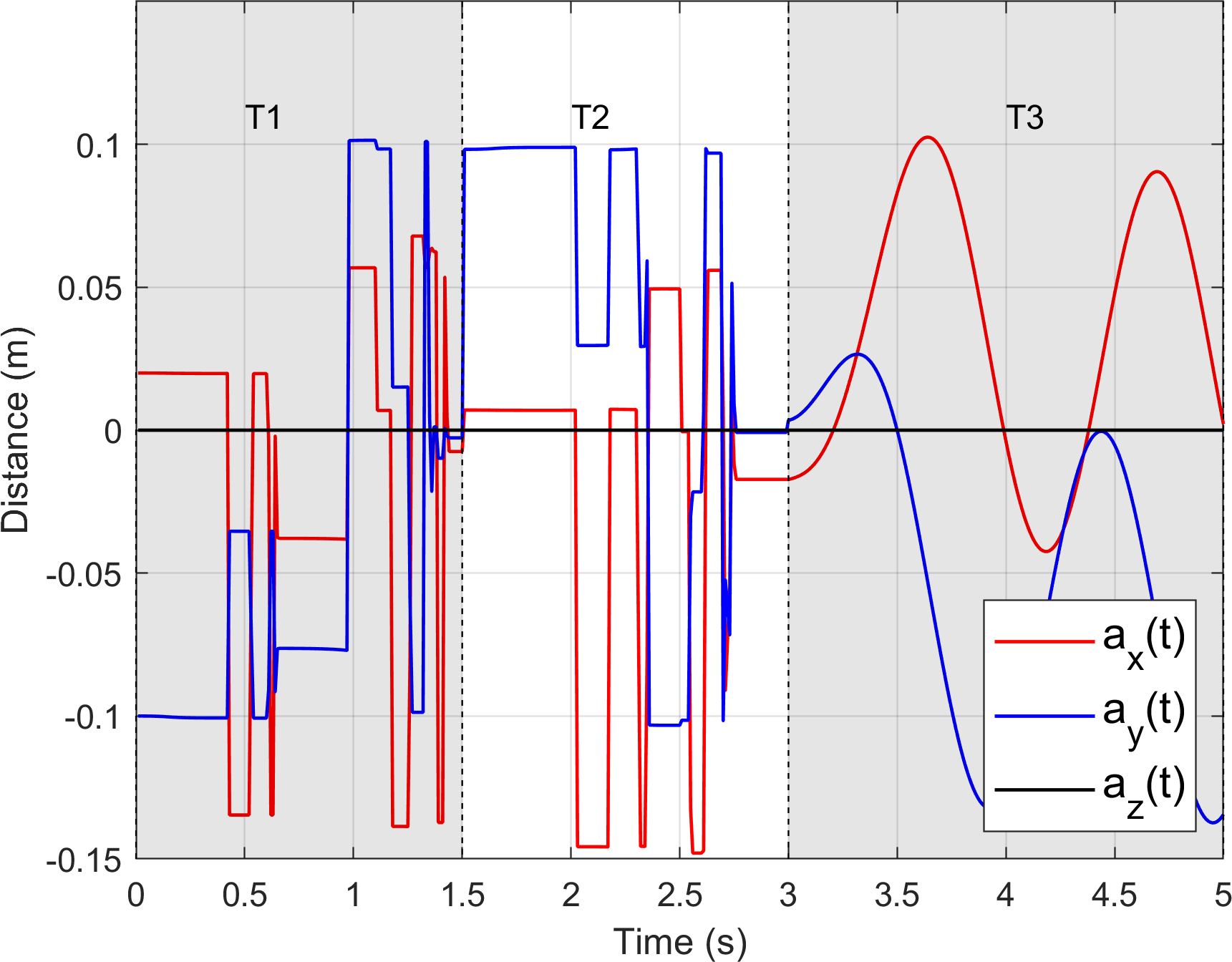}%
\caption{}
\label{figure:ex2_4} 
\end{subfigure}
\begin{subfigure}{1\columnwidth}
\includegraphics[width=\columnwidth]{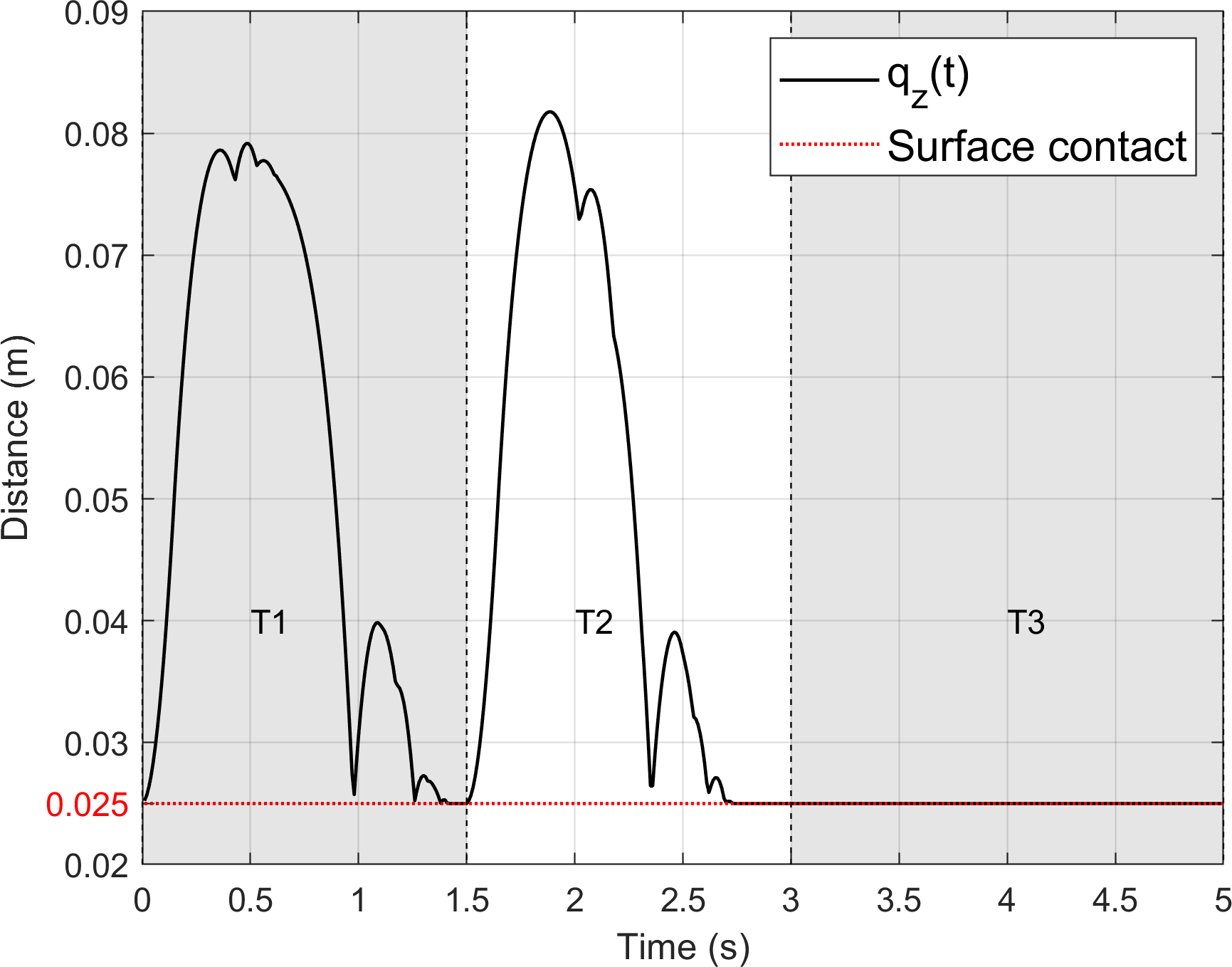}%
\caption{}
\label{figure:ex2_5} 
\end{subfigure}
\begin{subfigure}{1\columnwidth}
\includegraphics[width=\columnwidth]{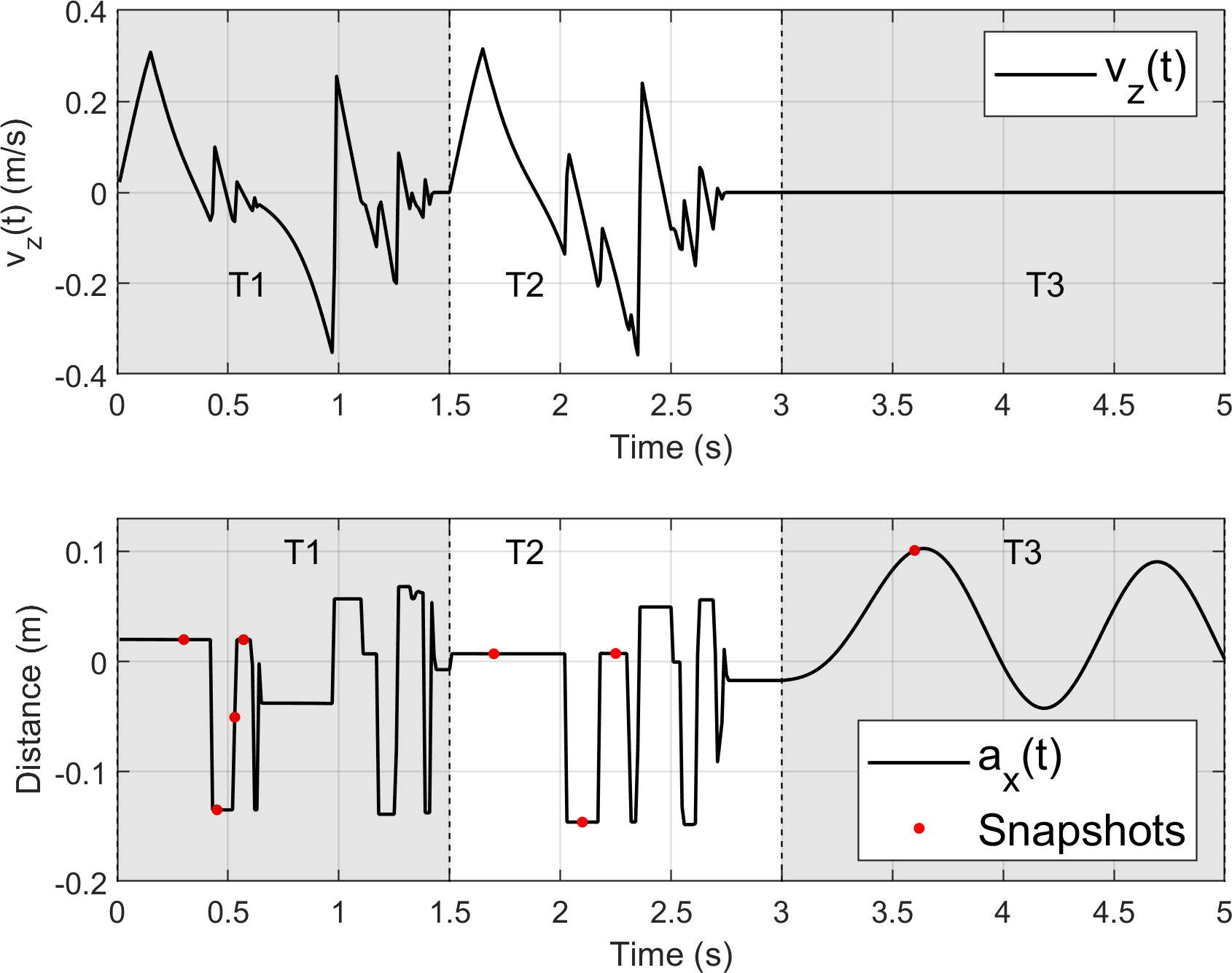}%
\caption{}
\label{figure:ex2_6} 
\end{subfigure}
\caption{ Simulation for the motion of T-shaped bar showing transitions among non-convex surface contact and non-convex two point contact. In the plots, we divide the time trajectories into three phases from T1 to T3. In (a) and (b), we plot the applied forces ($F_x,F_y$) and applied torques ($\tau_x,\tau_y,\tau_z$) and from the gripper exerted on the T-shaped bar. In (c), we plot the snapshot showing a two-point contact between the bar and ground which is non-convex. ECP (in red) lies on the line joining (i.e., convex hull of) two contact points.  (d) shows the trajectory for ECP. (e) Plot of $q_z$'s trajectory. (f) Plot of $z-$component of velocity, $v_z$, and the $x-$ coordinate of the ECP. Note that we take the snapshots at chosen timings (shown in red dots), and the plots are shown in Figure~\ref{Example2_snapshots}.}
\end{figure*}
\begin{figure*}
\begin{subfigure}{0.25\textwidth}
\includegraphics[width=\textwidth]{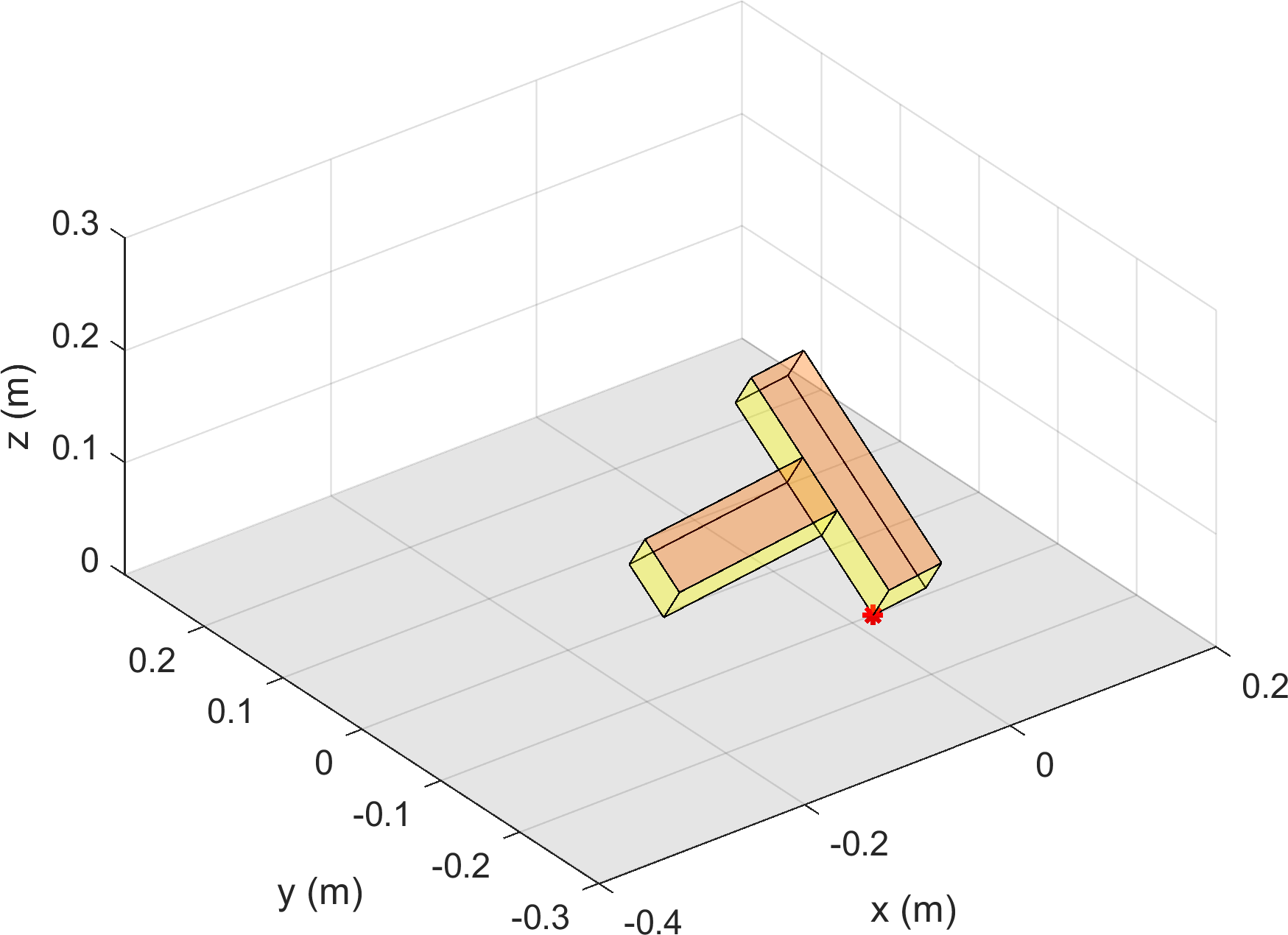}%
\caption{t = 0.30s (T1).}
\label{figure:ex2_t30} 
\end{subfigure}\hfill%
\begin{subfigure}{0.25\textwidth}
\includegraphics[width=\textwidth]{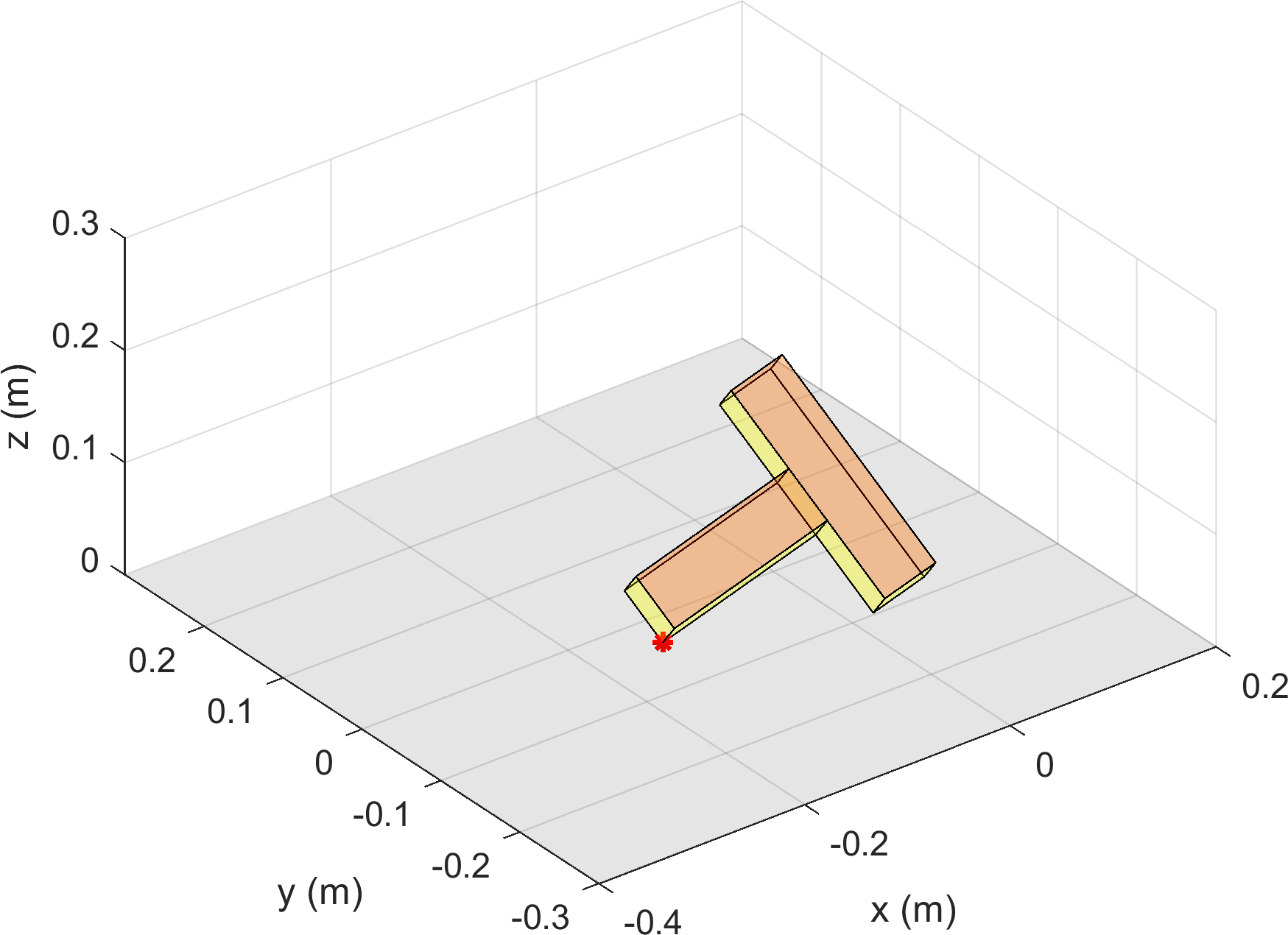}%
\caption{t = 0.45s (T1).}
\label{figure:ex2_t45} 
\end{subfigure}\hfill%
\begin{subfigure}{0.25\textwidth}
\includegraphics[width=\textwidth]{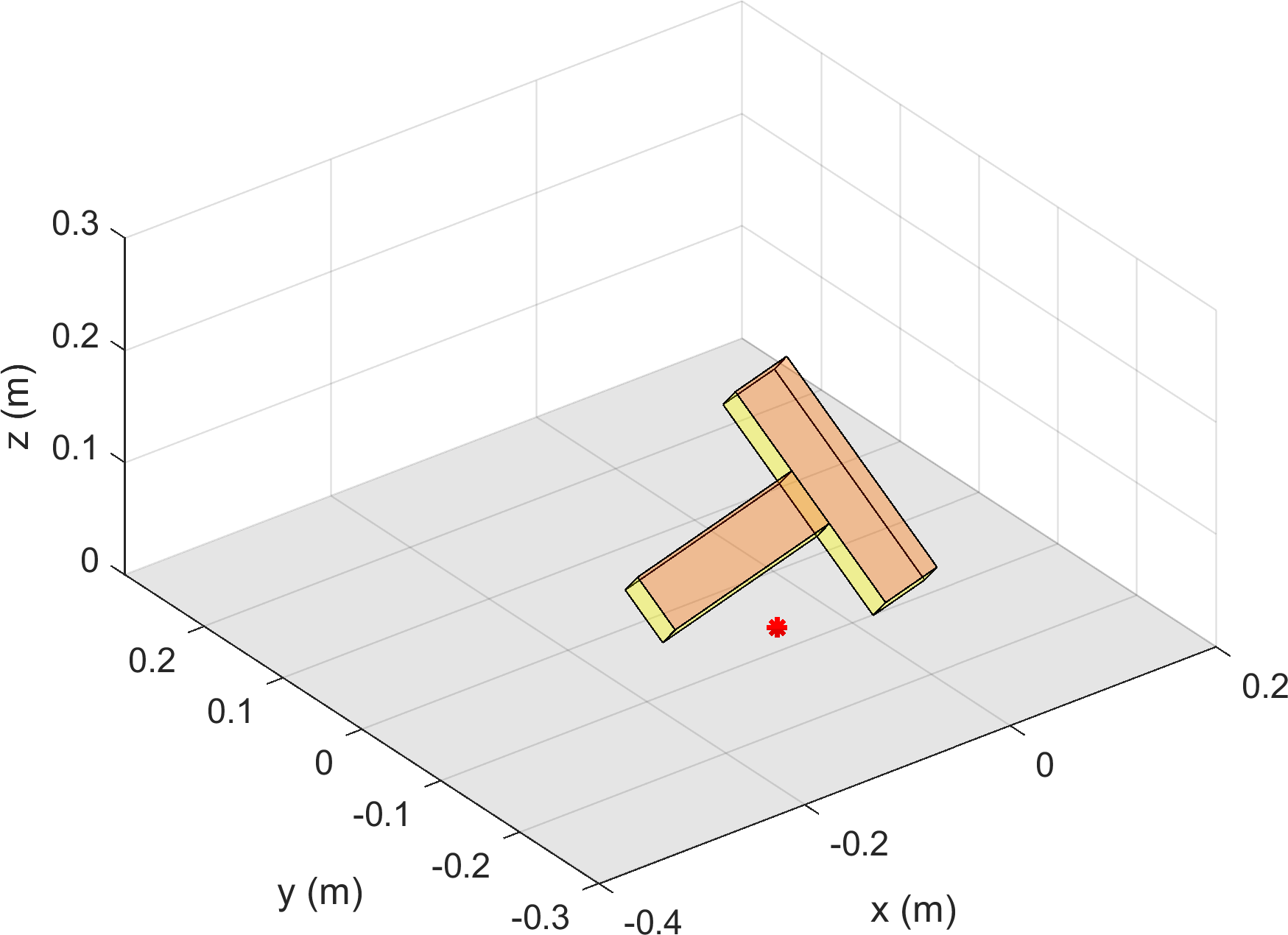}%
\caption{t = 0.53s (T1).}
\label{figure:ex2_t53} 
\end{subfigure}\hfill%
\begin{subfigure}{0.25\textwidth}
\includegraphics[width=\textwidth]{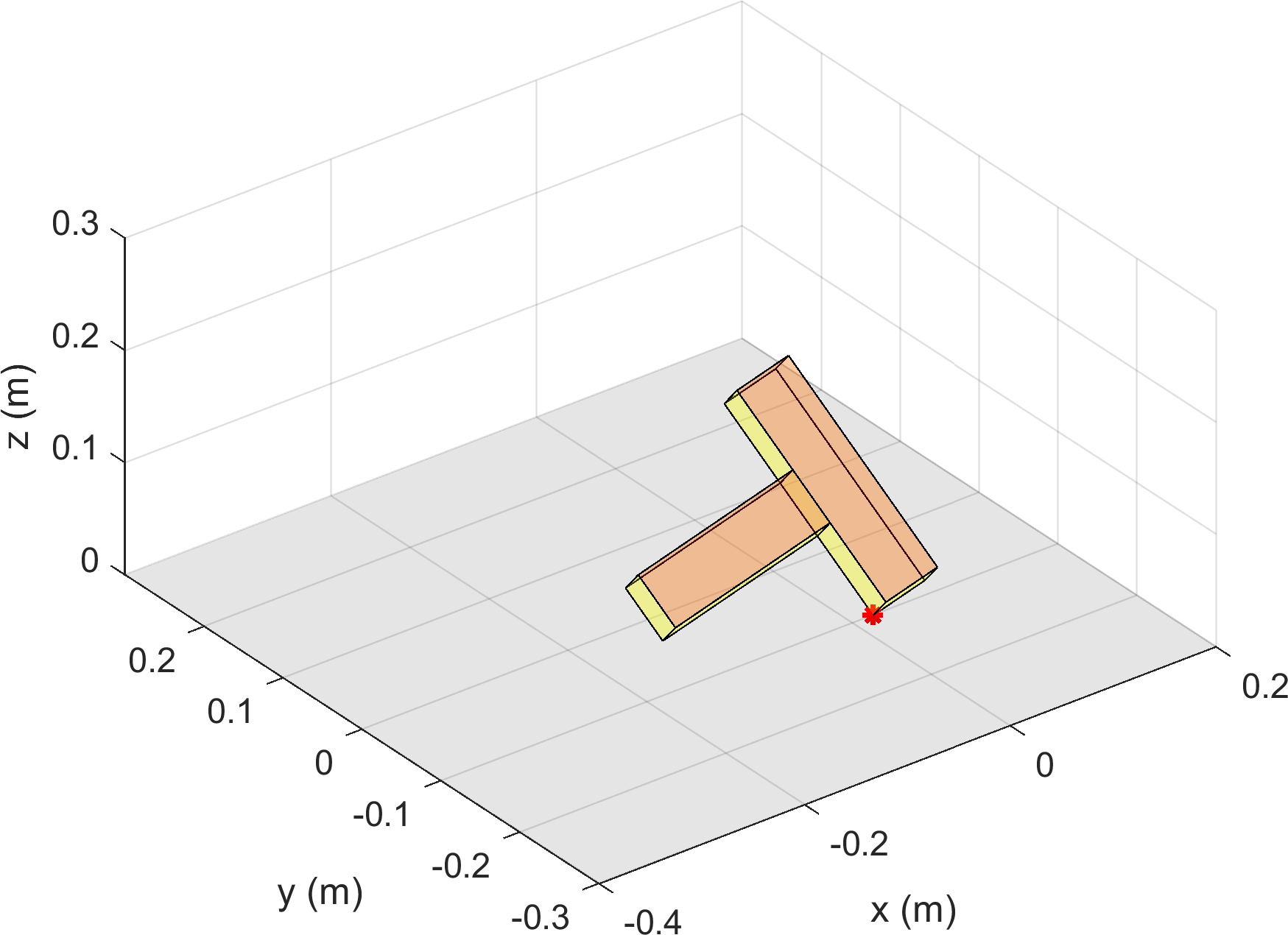}%
\caption{t = 0.57s (T1).}
\label{figure:ex2_t57} 
\end{subfigure}\hfill%
\begin{subfigure}{0.25\textwidth}
\includegraphics[width=\textwidth]{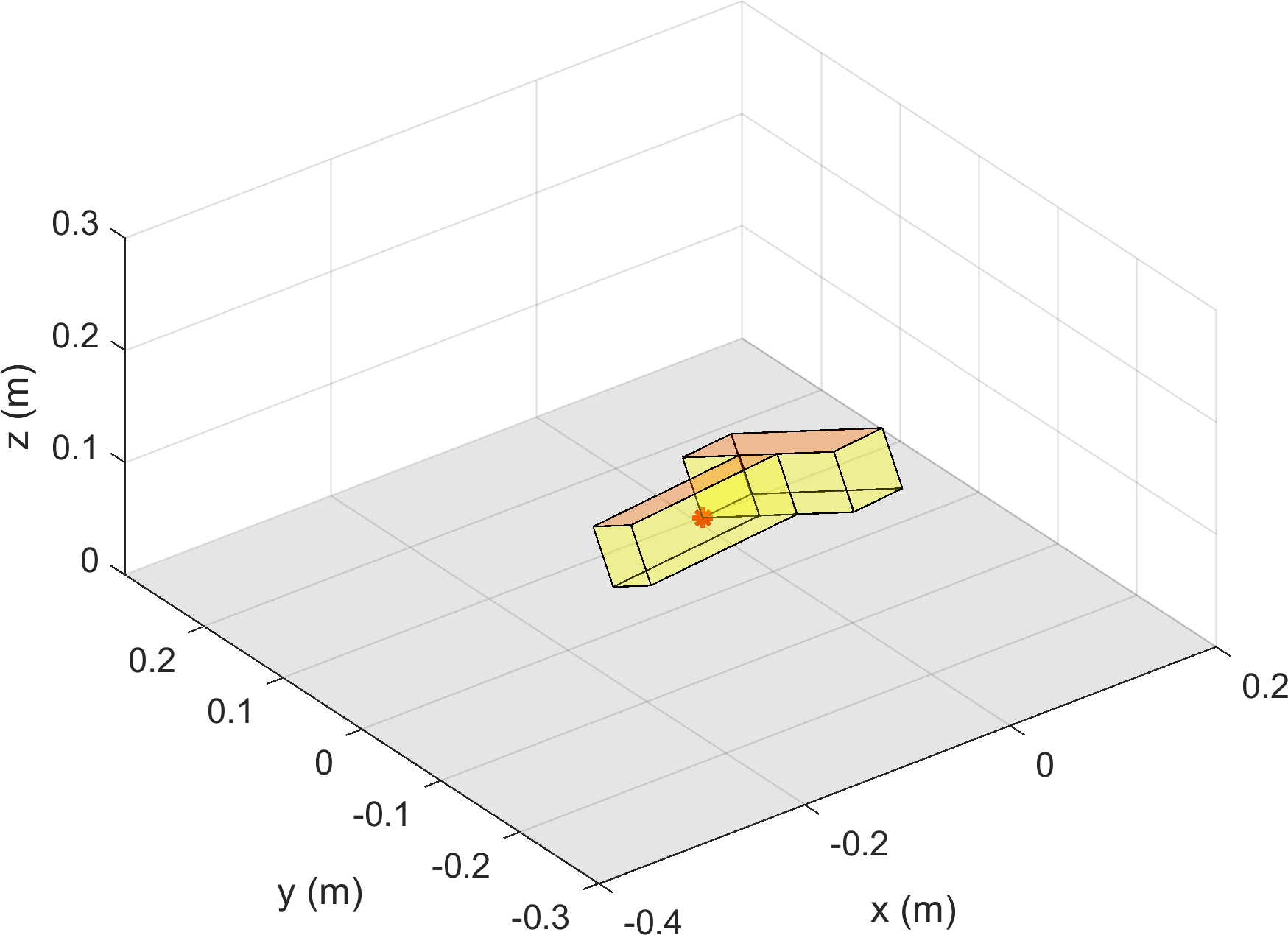}%
\caption{t = 1.7s (T2).}
\label{figure:ex2_t170} 
\end{subfigure}\hfill%
\begin{subfigure}{0.25\textwidth}
\includegraphics[width=\textwidth]{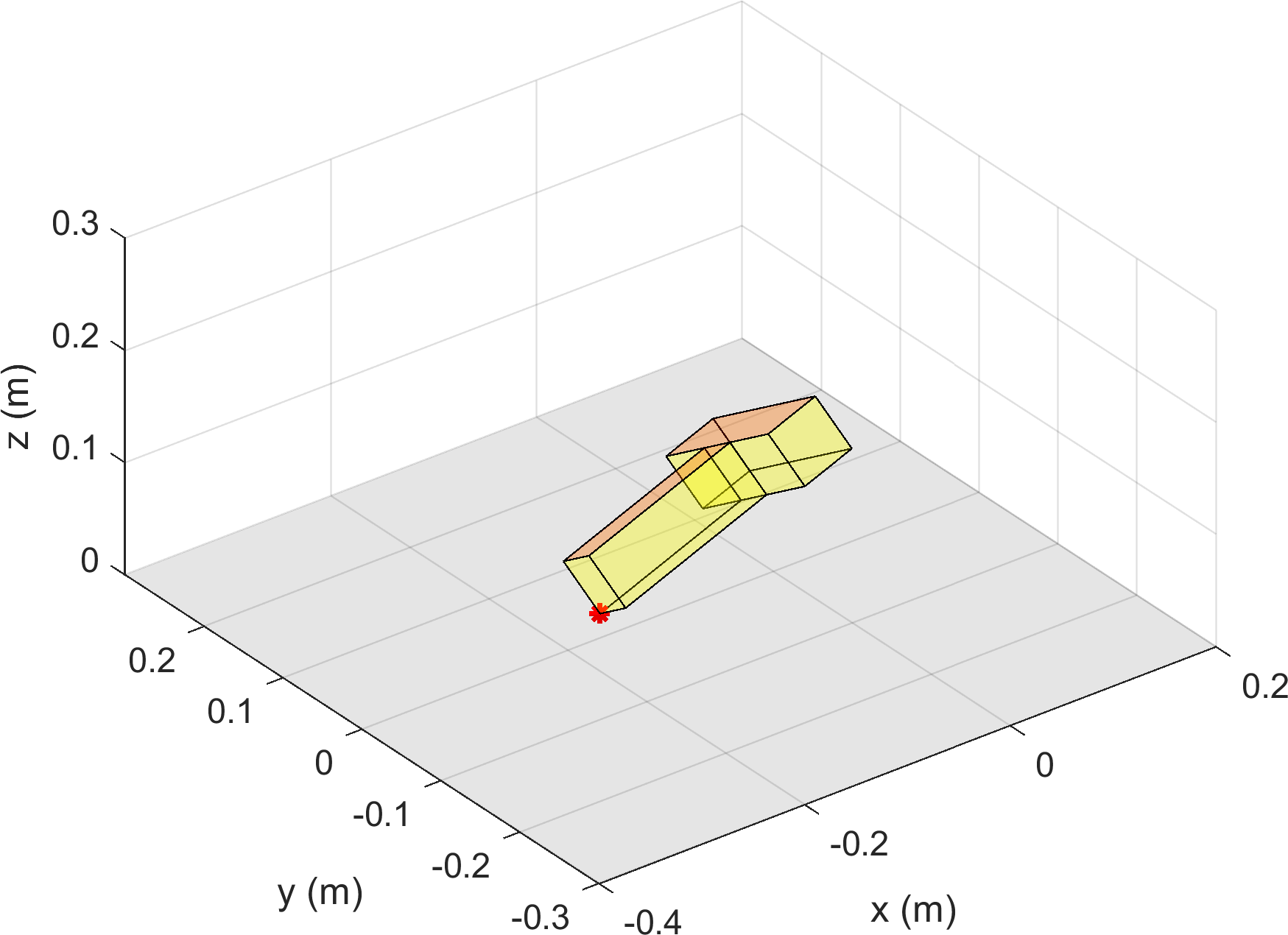}%
\caption{t = 2.10s (T2).}
\label{figure:ex2_t210} 
\end{subfigure}\hfill%
\begin{subfigure}{0.25\textwidth}
\includegraphics[width=\textwidth]{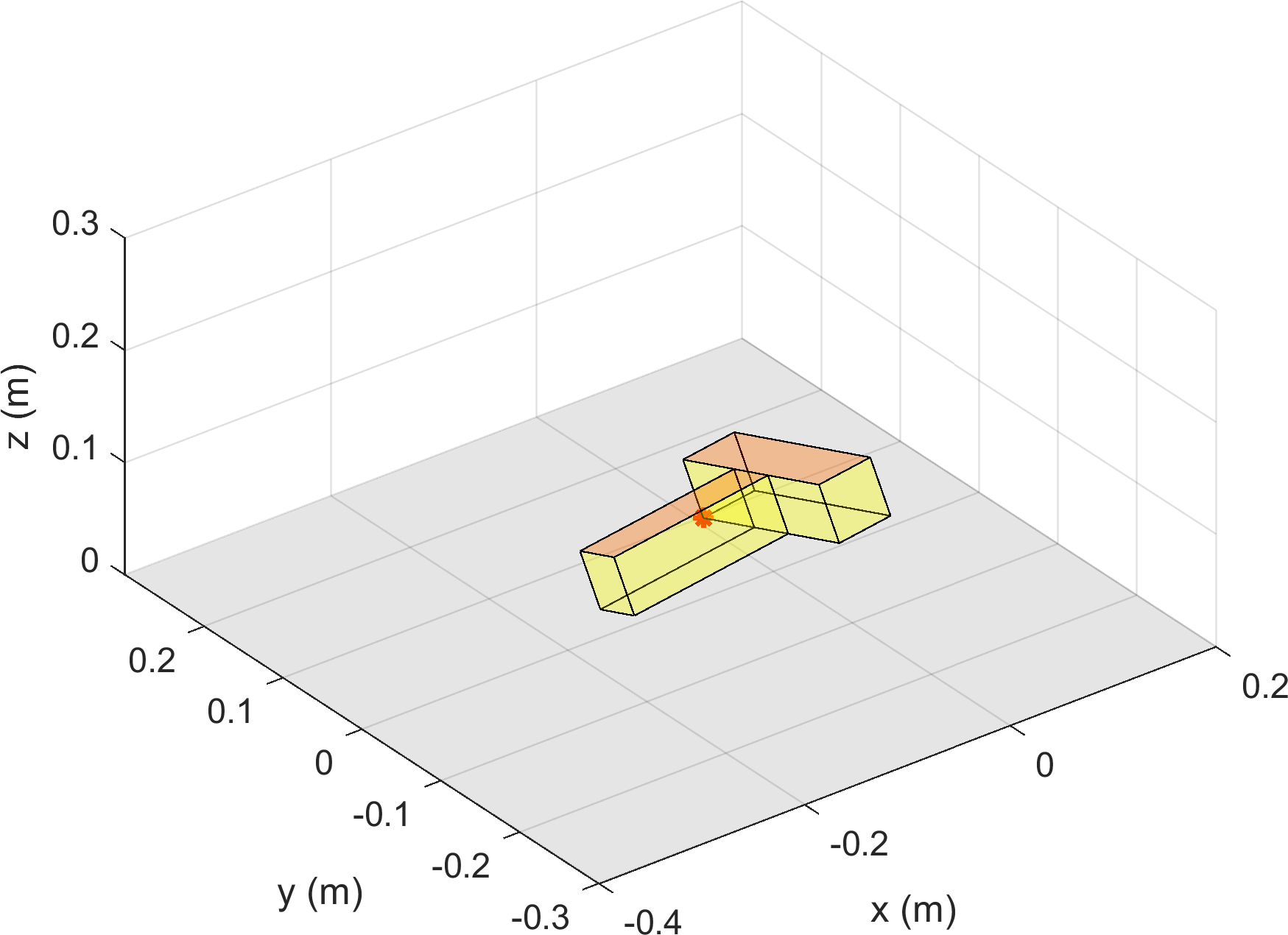}%
\caption{t = 2.25s (T2).}
\label{figure:ex2_t225} 
\end{subfigure}\hfill%
\begin{subfigure}{0.25\textwidth}
\includegraphics[width=\textwidth]{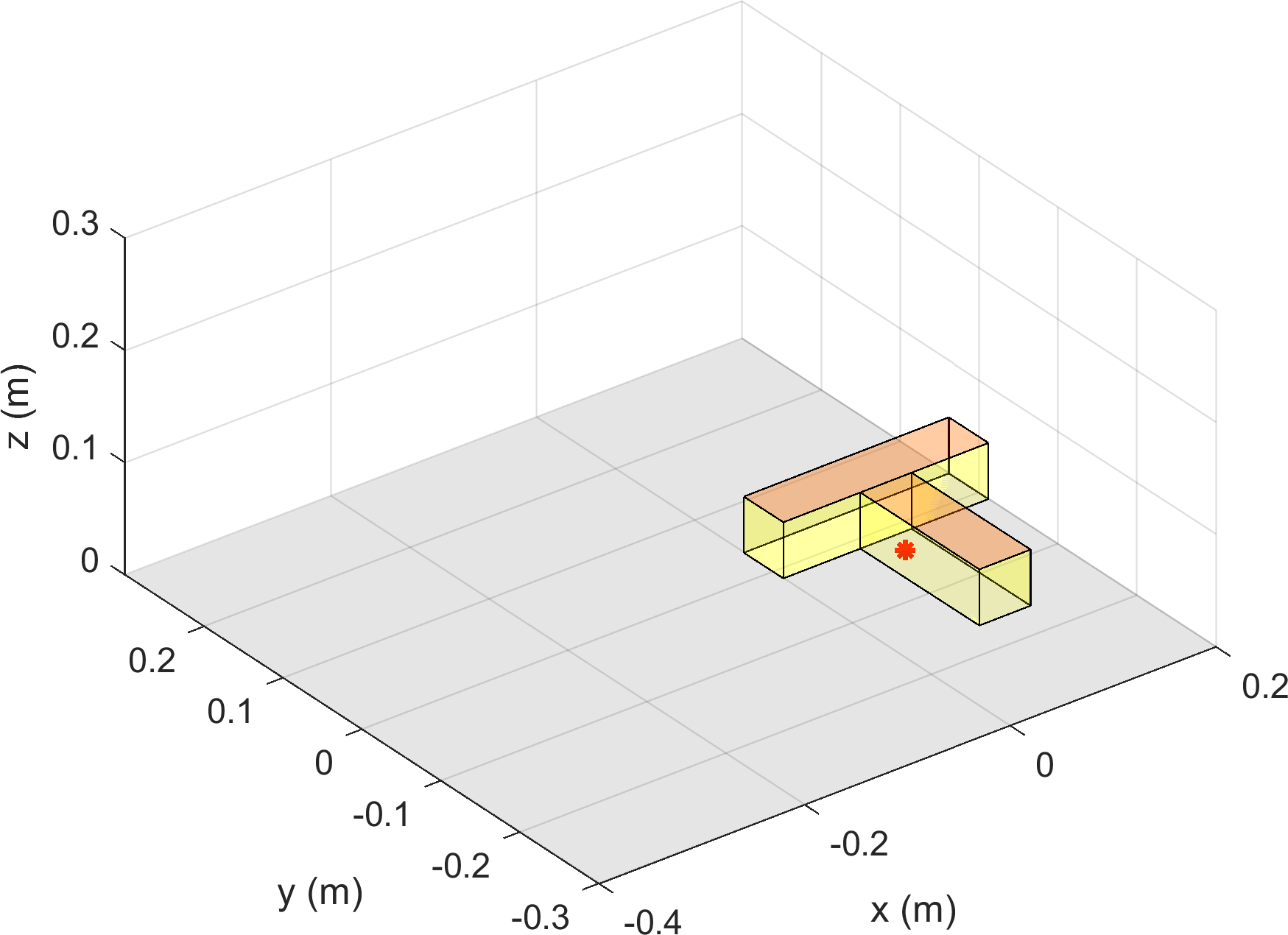}%
\caption{t = 3.60s (T3).}
\label{figure:ex2_t360} 
\end{subfigure}
\caption{ The snapshots of impact events during T1, T2 and T3 periods (The events are shown as the red dots in Figure~\ref{figure:ex2_6}). We plot the ECP (shown as the red dot) to show the transition of the bar between different contact modes. }
\label{Example2_snapshots}
\end{figure*}

\begin{figure*}[!htp]%
\begin{subfigure}{1\columnwidth}
\includegraphics[width=\columnwidth]{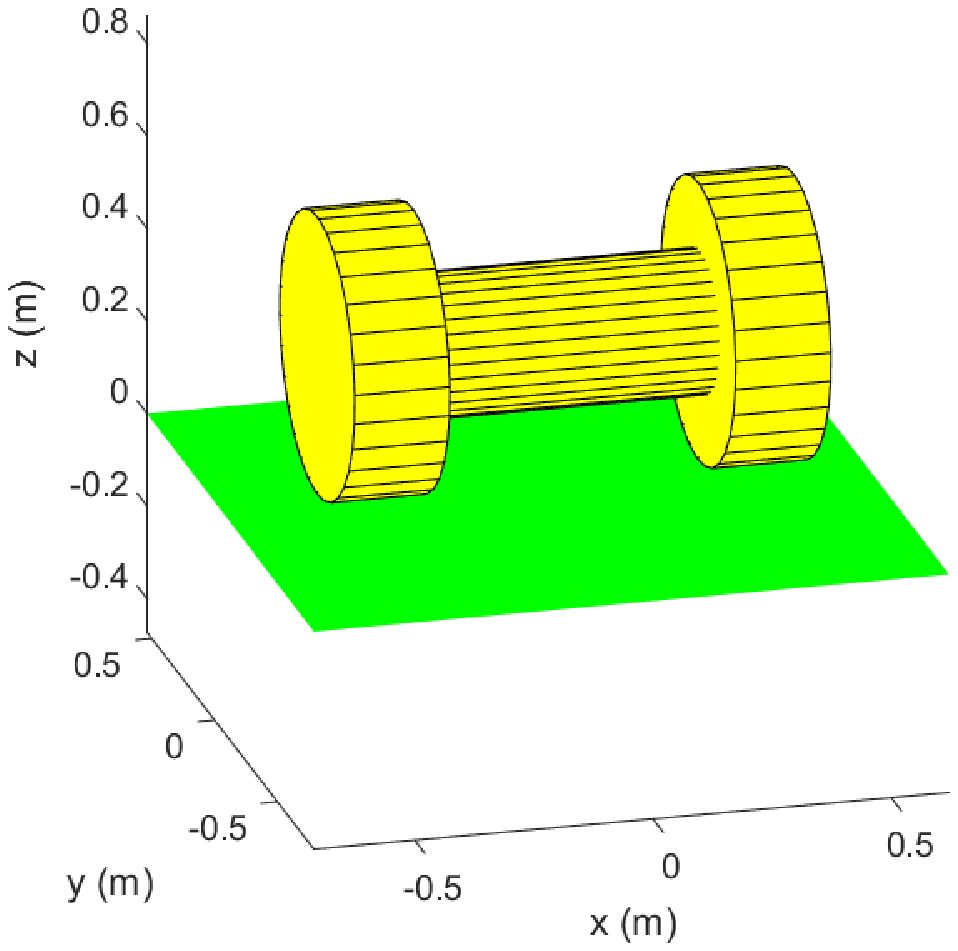}%
\caption{ }
\label{figure:ex3_1} 
\end{subfigure}\hfill%
\begin{subfigure}{1\columnwidth}
\includegraphics[width=\columnwidth]{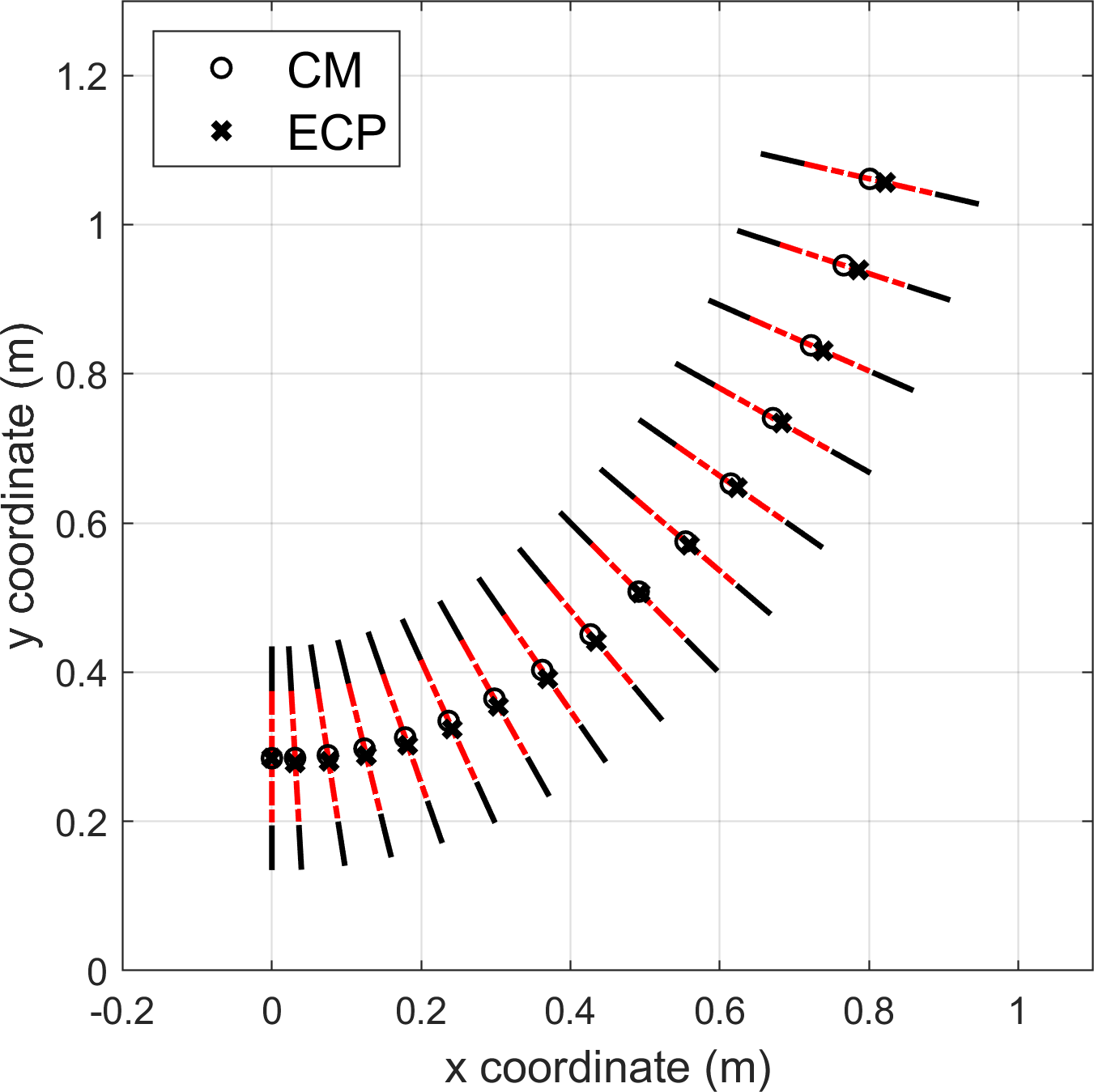}%
\caption{}
\label{figure:ex3_2} 
\end{subfigure}
\begin{subfigure}{1\columnwidth}
\includegraphics[width=\columnwidth]{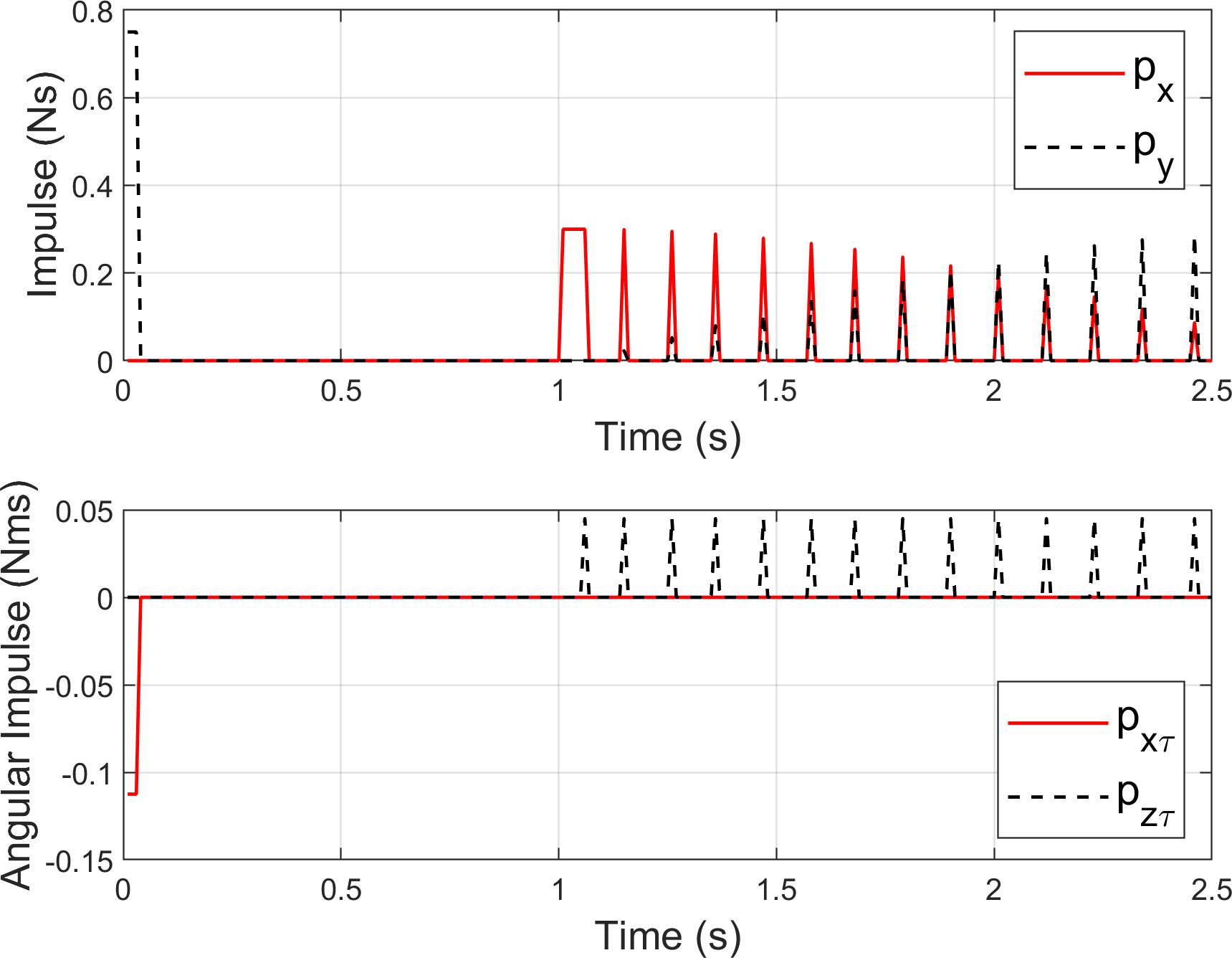}%
\caption{}
\label{figure:ex3_3} 
\end{subfigure}
\begin{subfigure}{1\columnwidth}
\includegraphics[width=\columnwidth]{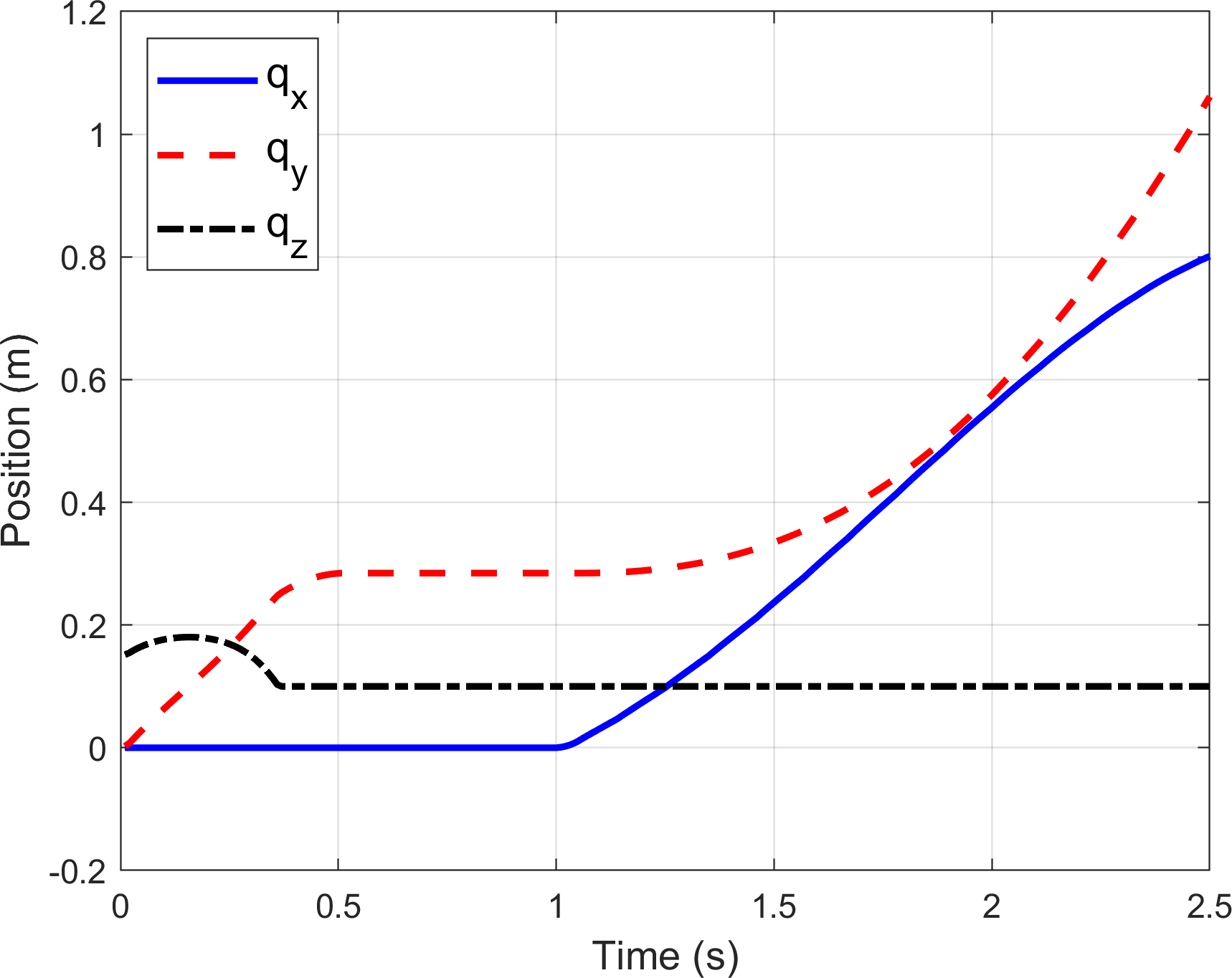}%
\caption{}
\label{figure:ex3_4} 
\end{subfigure}
\begin{subfigure}{0.2\textwidth}
\includegraphics[width=\textwidth]{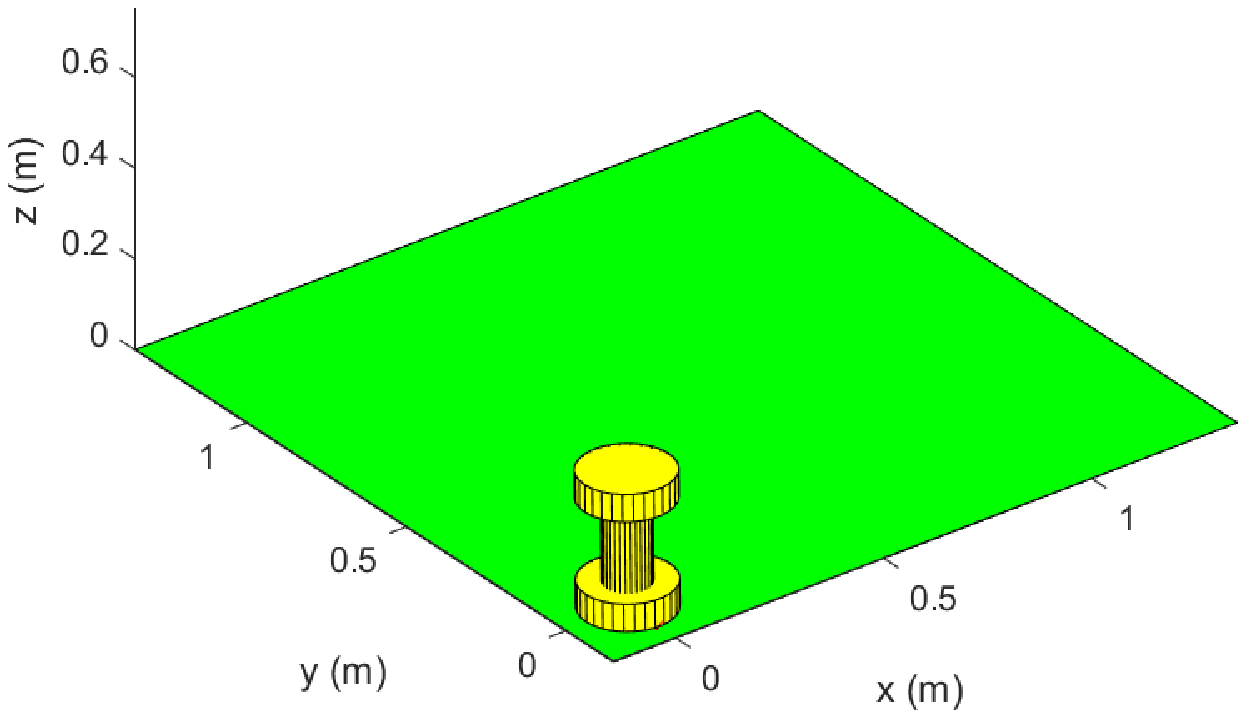}%
\caption{t = 0.01s  }
\label{figure:ex3_5} 
\end{subfigure}\hfill%
\begin{subfigure}{0.2\textwidth}
\includegraphics[width=\textwidth]{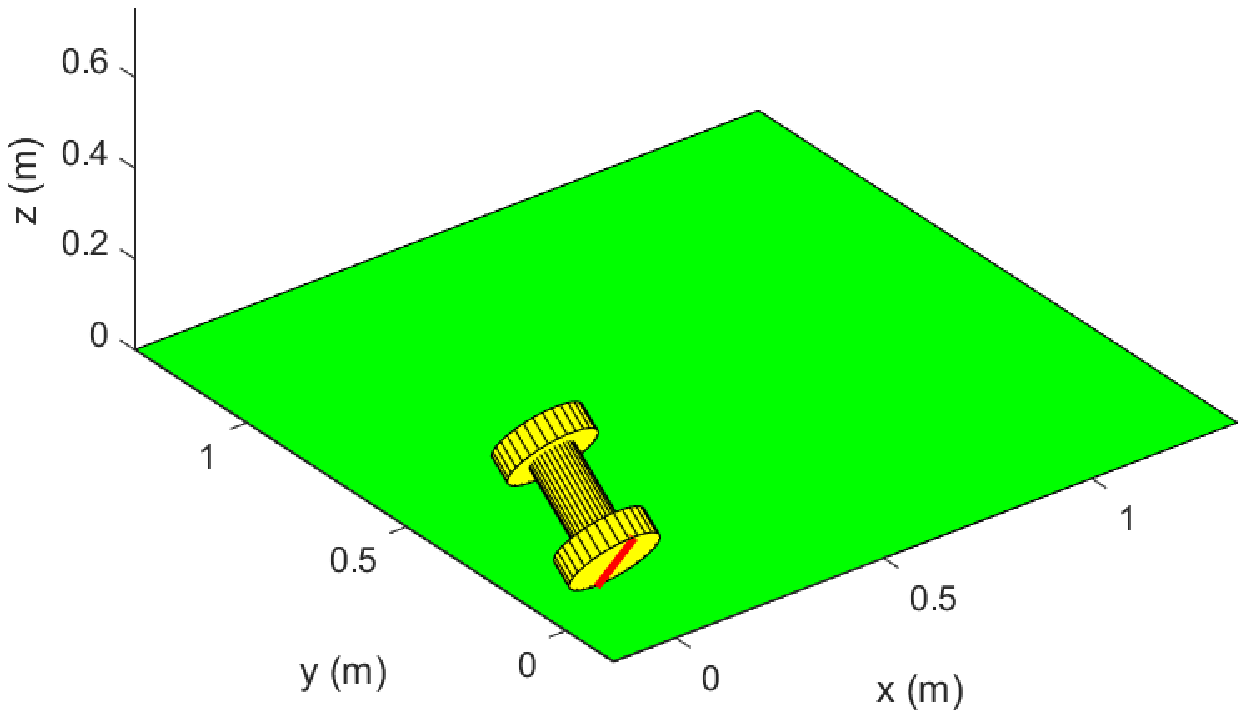}%
\caption{t = 0.25s }
\label{figure:ex3_6} 
\end{subfigure}\hfill%
\begin{subfigure}{0.2\textwidth}
\includegraphics[width=\textwidth]{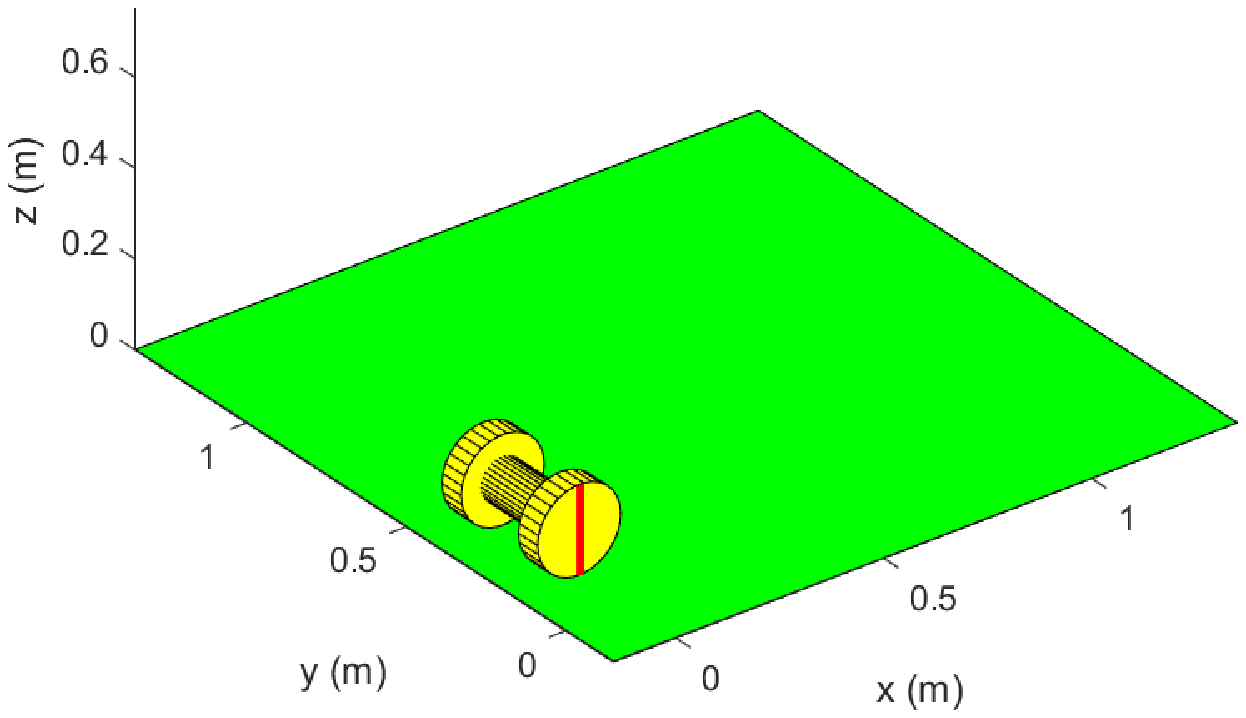}%
\caption{t = 0.50s }
\label{figure:ex3_7} 
\end{subfigure}\hfill%
\begin{subfigure}{0.2\textwidth}
\includegraphics[width=\textwidth]{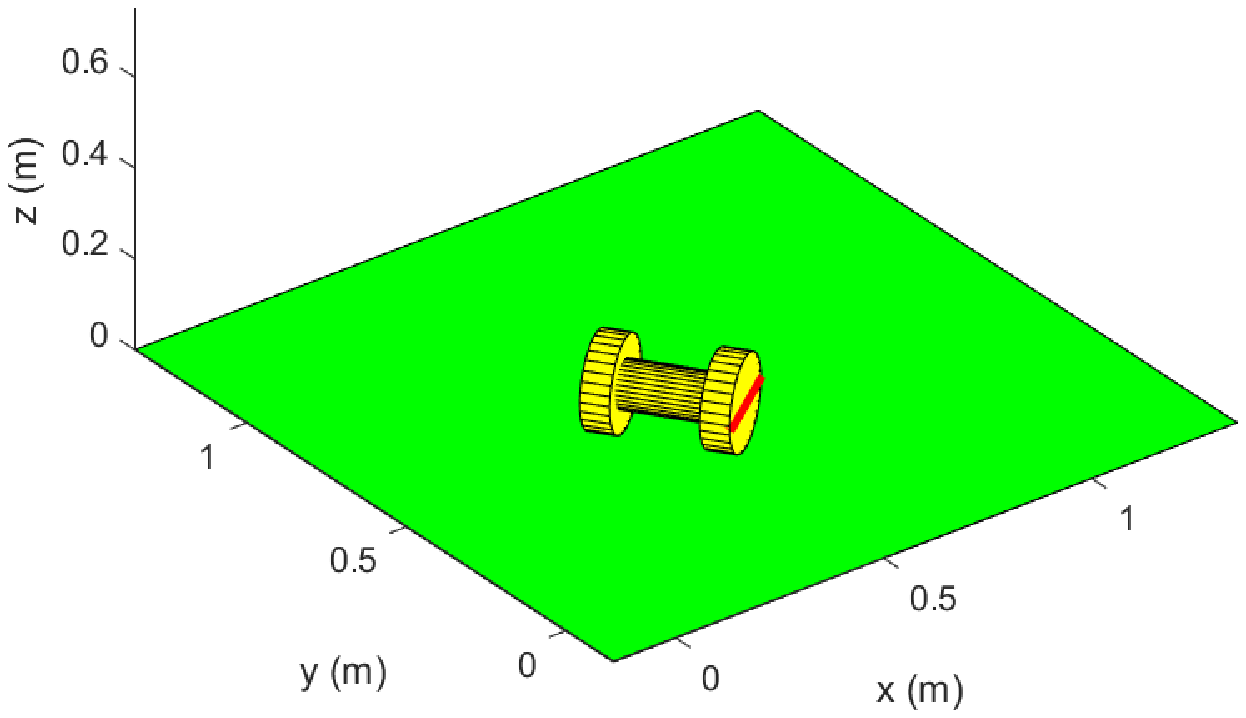}%
\caption{t = 1.75s }
\label{figure:ex3_8} 
\end{subfigure}\hfill%
\begin{subfigure}{0.2\textwidth}
\includegraphics[width=\textwidth]{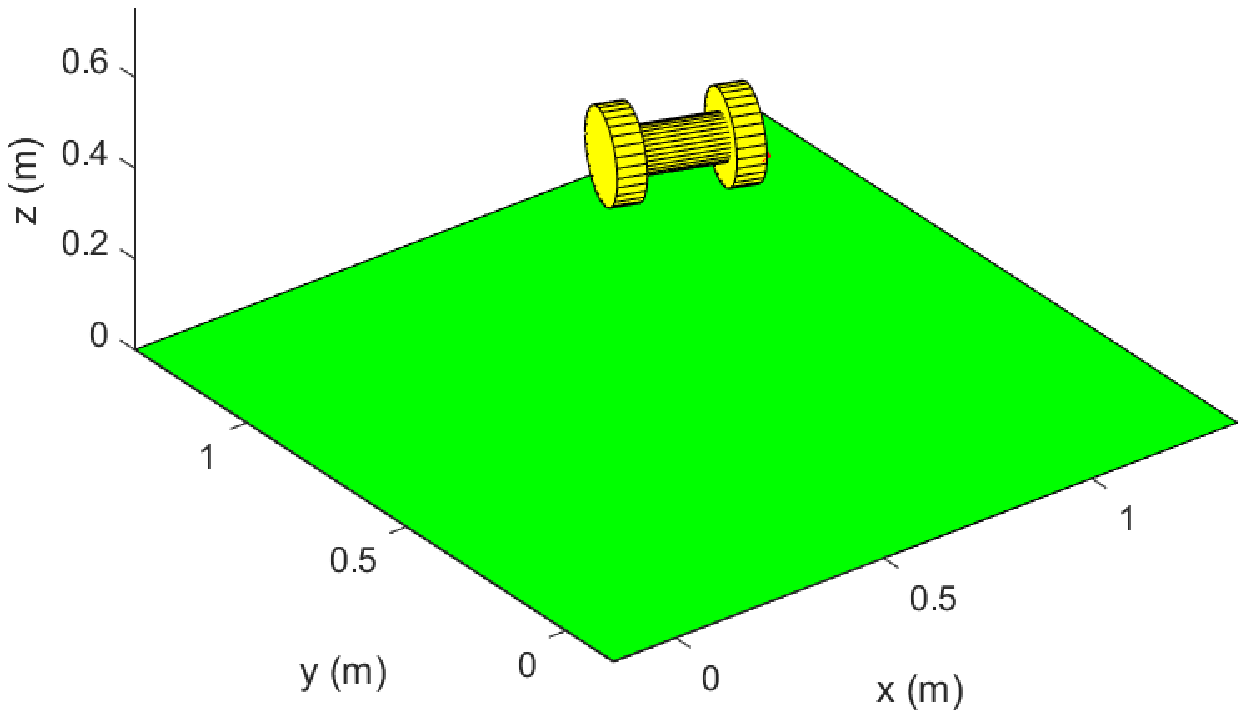}%
\caption{t = 2.5s }
\label{figure:ex3_9} 
\end{subfigure}
\caption{Simulation for the motion of dumbbell shaped object. Given the applied impulse,  the dumbbell topples and then rolls and rotates on the ground with non-convex line contact. (a) A cylindrical dumbbell shaped object where one contact patch is a union of two line segments. The convex hull for the contact patch is a line segment. (b) Snapshots of the contact patch as the dumbbell rolls and rotates on the planar support. The ECP (black cross) moves along the contact line and is usually different from the projection of center of mass. (c) Applied impulse on the dumbbell. The applied impulses $p_z$ and $p_{y\tau}$ are zero, and hence not shown.(d) The position of center of mass of the dumbbell. Note that, when the dumbbell rolls and rotates on the ground (after $t = 0.35$s), the height of CM $q_z$ stays constant as depicted by the plot for $q_z$. We plot the snapshots of dumbbell in (e) to (f).}
\label{Example3}
\end{figure*}
In this section we evaluate the performance of our proposed method on three example problems. The chosen problems are motivated by robotic manipulation scenarios, where the robot wants to manipulate the objects by exploiting contacts with the environment. The videos of the three simulation scenarios are available at~\cite{videos}. All the simulations are run in MATLAB on a MacBook Pro with 2.6 GHZ processor and 16 GB RAM.  
\subsection{Pushing a desk with four legs}
We first consider the problem of predicting motion of a square desk with four legs pushed by a robot, where the contact patch between desk feet and support is a union of four squares (see Figure~\ref{figure:ex1_1}). Such problems are useful for robots rearranging furniture in domestic environments. The dimension for the square desk is length $L = 0.5$m, the length for each small square is $L_s = 0.06$m and height of desk's CM is $H = 0.45$m. The mass of desk is $m = 15$kg and the gravity's acceleration is $g = 9.8$m/s$^2$.

The purpose of choosing this example is multi-fold. First, we want to show that we can simulate the motion of the table where the contact region is topologically disconnected and a union of four convex regions. Second, in this example, we will restrict ourselves to sliding motion of the table, i.e., we apply forces so as to ensure sliding without toppling. Thus, the set of contact points on the table do not change during motion. This is done so that we can compare the results to our previously developed method (with non-convex patch contact) for dynamic sliding motion only~\cite{XieC18b}. This is a sanity check for the predictions of the model presented in this paper since for the case of planar sliding the results of the two methods should match. We also use this example to compare with our previous effort in~\cite{XieC18a}, where we did not use the convex hull of the rigid body but considered each convex contact patch as a separate contact between the two bodies.  


The time step chosen for all the simulations is $h = 0.01$ s and simulation time is $4$ s. The coefficient of friction between desk and support is $\mu=0.22$ and the given constants for friction ellipsoid are $e_t = e_o = 1$, $e_r = 0.1$ m. As shown in Figure~\ref{figure:ex1_1}, the desk slides on the support. The initial position of CM is $q_x = q_y = 0$ m, $q_z = 0.45 $m and orientation about normal axis is $\theta_z = 0$ degree. The initial velocity is $v_x = 0.3$ m/s, $v_y = 0.2$ m/s, $w_z = 0.5$ rad/s. The external forces and moments from grippers exerted on the desk is periodic, $f_x = 22.5\sin(2\pi t)+ 22.5$ N, $f_y = 22.5\cos(2\pi t) + 22.5$ N, $\tau_z = 2.1\cos(2\pi t)$ Nm, where $t \in [0,4]$ s.

In Figure~\ref{figure:ex1_2}, we show the snapshots for the contact patch during the motion. It can be seen that the table translates as well as rotates during motion. The ECP is marked by a red cross and it can be seen that the ECP is not within the contact patch and it is also not below the center of mass of the table (which matches the intuition, since the table is rotating).  In the first row of Figure~\ref{figure:ex1_3}, we plot the $x$-component of the linear velocity, $v_x$. In the first row of Figure~\ref{figure:ex1_4}, we plot the angular velocity about the normal to the plane, $w_z$. In the plots, we define $v^{\prime}_x$, $w^{\prime}_z$ as the solutions of the method in~\cite{XieC18b}, $\bar{v}_x$, $\bar{w}_z$ as the solutions of the method in~\cite{XieC18a}, and $v^*_x$, $w^*_z$ as the solutions of our proposed method in this paper. We observe that the solutions of the method in~\cite{XieC18b, XieC18a} and this paper coincide each other. The difference in $v_x$ and $w_z$ between the proposed method and the methods in~\cite{XieC18b, XieC18a} are shown in the second row in Figures~\ref{figure:ex1_3} and~\ref{figure:ex1_4}. We can observe that the differences for $v_x$ and $w_z$ are within $1e-8$, which validates the accuracy of the proposed method numerically.


Furthermore, the average time the model in~\cite{XieC18b} spends for each time step is $0.0022$s. The time our proposed method method spends is $0.0053$s (which is $2.4$ times than~\cite{XieC18b}), and the time the model in~\cite{XieC18a} spends is $0.0487$s (which is more than $22$ times than quadratic model's and $9$ times than the current method). To summarize, the proposed method simplify the model in~\cite{XieC18a} greatly by modeling multiple contact patches with a single patch and therefore is much more efficient without sacrificing accuracy. The model in~\cite{XieC18b}, although faster is valid only for sliding and cannot be applied to situations where the object may topple.

\subsection{Manipulating a T-shaped bar}
This example is used to illustrate that our method allows objects to automatically transition between different contact modes (surface, point, line and also making and breaking of contact), while ensuring the objects do not penetrate. As Figure~\ref{figure_Motivation} illustrates, the planar contact patch between the T-shaped bar and the support is non-convex. The dimensions of the bar are given in Figure~\ref{figure_Motivation}. The mass of the bar is $2$ kg, the other parameters like gravity and friction parameters are the same as in the first example. The time step chosen is $h = 0.01$ s and the total simulation time is $t = 5$ s.

In this scenario, we first make the T-bar tilt and wobble twice on the ground, which can be divided into the phase T1 (from $t = 0.01$s to $t = 1.5$s) and the phase T2 (from $t = 1.51$s to $t = 3.0$s). Then, during the phase T3 (from $t = 3.0$s to $t = 5.0$s), we make the T-bar slide and rotate with surface contact on the plane. Figures~\ref{figure:ex2_1} and~\ref{figure:ex2_2} show the applied forces  and moments  from the gripper acting on the bar. Figure~\ref{figure:ex2_4} shows the variation of the coordinates of ECP (i.e., $a_x,a_y,a_z$) with time. Note that the coordinate of ECP along $z$ axis, i.e., $a_z$, stays zero within the numerical tolerance of $1e^{-12}$ during the motion. Thus, there is no penetration between the bar and ground. Besides, this implies that the contact between the T-bar and the ground is always maintained during the motion. Thus, the T-bar does not bounce on the ground (as should be the case, given the implicit assumption of plastic collision during impact). Furthermore, the jumps in the $x$ and $y$ coordinate of the ECP shows transition between one point and two point contacts. From Figure~\ref{figure:ex2_3}, one can see that when there is two point contact, the ECP (shown in red) lies on the line joining the two points. When there is a switch to one point contact, i.e., the contact point becomes one of the two black points, the ECP becomes this point. Hence, the $x$ and $y$ coordinate jumps. Similarly, when there is a switch from a single point contact to a two point contact the ECP jumps.

 Figure~\ref{figure:ex2_5} shows the trajectory of the $z$ coordinate of the center of mass of the bar, namely, $q_z$.  Figure~\ref{figure:ex2_6} shows the variation of the velocity of the center of mass, $v_z$, in the top row and the $x-$coordinate of the equivalent contact point (ECP) in the bottom row. Note that $q_z $ is equal to $0.025$m between $1.4s\sim 1.5s$ or after $2.8s$. During those time periods, the T-bar has non-convex surface contact with the ground. At other times, the T-bar has single point or two point contact (see the video: \url{https://youtu.be/T7zV5pEPBeY}).

In Figure\ref{figure:ex2_6}, we juxtapose the two figures to show that the timings of jump in velocities of the center of mass of the T-bar corresponds to the  timings where there is a jump in the $x-$coordinate of the ECP, i.e., the times at which there is an impact due to contact mode change.  Note that the collision is inelastic, so the $z$-component of the velocity at the (actual) impact point goes to zero at impact. The velocity, $v_z$ of the center of mass jumps, but it may not go to $0$. Also, note that we do not track the contact points explicitly, we are actually computing the ECP which always lies on the ground, since we set up the simulation with the object on the ground and applied forces/moments such that the object is always in contact with the ground. In other words, the $z-$component of the velocity of the ECP is always $0$. The velocity $v_z$ is zero during the part of the motion when the T-bar is sliding on the ground. To visualize the different contact modes during the motion, we plot snapshots of some of the contact modes. The timings of the contact modes chosen are shown with red dots in the bottom panel of Figure~\ref{figure:ex2_6}). The corresponding snapshots are shown in  Figure~\ref{Example2_snapshots}.

The simulation starts with the T-bar lying flat on the ground with surface contact. The applied torque in phase T1 pivots the T-bar about one vertex as shown in Figure~\ref{figure:ex2_t30}. When the applied torque stops acting, the T-bar falls under the effect of gravity and the pivot point switches to another vertex and there is a period of motion with this new vertex in contact (one such snapshot is shown in Figure~\ref{figure:ex2_t45}). Figure~\ref{figure:ex2_t57} shows that the motion again transitions to a contact mode that is same as shown in Figure~\ref{figure:ex2_t30} and for this transition to happen, there is an intermediate two point contact mode as shown in Figure~\ref{figure:ex2_t53}. Therefore, the ECP lies on the line joining the two contact points as can be seen from the Figure~\ref{figure:ex2_t53}. Note that there is also a two point contact mode (which is not shown in the figure) in going from the pose shown in Figure~\ref{figure:ex2_t30} to the pose in Figure~\ref{figure:ex2_t45}. Thus, the T-bar is rocking back and forth on these two vertices as it falls flat on its face before the phase T2 begins. There are more contact mode transitions that happen in T1 as can be seen from Figure~\ref{figure:ex2_6}, but we have only shown the first few in Figure~\ref{Example2_snapshots}.

The torque applied during phase T2 is such  that $T_x$ is in the opposite direction in T2 compared to phase T1 (see Figure~\ref{figure:ex2_2}). Thus, the motion is similar to the motion in Phase 1, but the contact points are now on the other side of the axis of symmetry of the T-bar. Similar rocking motion occurs with contact mode transitions and some of the contact modes are shown in Figures~\ref{figure:ex2_t170} -~\ref{figure:ex2_t225}. In the phase T3, the T-bar rotates and translates on the plane with surface contact.  During this phase, the ECP changes continuously, as shown in the Figure~\ref{figure:ex2_6}. One snapshot during this motion is shown in Figure~\ref{figure:ex2_t360}. Note that all these transitions were automatically handled by our algorithm.

\subsection{Simulation Scenario with non-convex line contact}
In this example, we simulate a rigid dumbbell moving in contact with a planar support. This example is chosen to illustrate that our method can tackle non-convex line contact where the contact region is topologically disconnected. Furthermore, as the object rolls and rotates, the contact region on the dumbbell changes with the motion. As shown in Figure~\ref{figure:ex3_1}, the planar contact patch between two ends of the dumbbell and the ground is a union of two line segments, which is a non-convex line contact. In Figure~\ref{figure:ex3_2}, we plot the snapshots at each time step for the contact patch when the dumbbell rolls and rotates on the ground. The two line segments (solid black lines) represent the physical contact region, and the convex hull is the entire line (two black lines and the dashed red line in between). Note that, when the dumbbell slides on the ground, the contact line segments on its body stays the same, but when it starts rolling, the line segments change along with the motion. 

The dimensions of the dumbbell are: $L = 0.3$m, $L_b = 0.18$m, $R = 0.1$m, $R_b = 0.05$m, where $L$ is the length of the dumbbell, $L_b$ is the length of the bar, $R$ is the radius of each end, and $R_b$ is the radius of the bar. The mass of the dumbbell is $3$kg, and the other parameters like gravity and friction parameters are the same as in the previous examples. The time step chosen is $h = 0.01$s and the total simulation time is $t = 2.5$s. Figures~\ref{figure:ex3_3} and~\ref{figure:ex3_4} show the external applied impulses on the dumbbell. 

Figure~\ref{figure:ex3_2} displays the snapshots for the contact patch during motion. The ECP is marked by a black cross and the projection of CM is marked by a black circle. It can be seen from the figure that the ECP is not within the physical contact region (line segments in black). However, it always lies in the convex hull of the contact regions (on the dashed red line that joins the black contact lines). Furthermore, the ECP has non-zero distance from the projection of the CM due to the fact that the dumbbell is rotating about the global $z$ axis as it is rolling. As shown in the snapshots, initially, the dumbbell has surface contact on the ground (Figure~\ref{figure:ex3_5}). We then exert the applied forces and torques on the dumbbell (shown in Figure~\ref{figure:ex3_3}). The dumbbell falls down with point contact (Figure~\ref{figure:ex3_6})  and moves to a pose with non-convex line contact with union of two line segments (Figure~\ref{figure:ex3_7}). Then it rolls and rotates on the ground (Figures~\ref{figure:ex3_8} and~\ref{figure:ex3_9}). During rolling contact, the contact regions on the object changes continuously. All these transitions were automatically detected by our algorithm.

\section{Conclusion}
\label{se:conclusion}
In this paper we presented a geometrically implicit time-stepping method for solving dynamic simulation problems with planar non-convex contact patches. In our model, we use a convex hull of the non-convex object and combine the collision detection with numerical integration of equations of motion. This allows us to solve for an equivalent contact point (ECP) in the convex hull of the non-convex contact patch as well as the contact wrenches simultaneously. We prove that although we model the contact patch with an ECP, the non-penetration constraints at the end of the time-step are always satisfied. We present numerical simulation for motion prediction for three example scenarios that are representative of applications in robotic manipulation. The results demonstrate  that our method can automatically transition among different contact modes (non-convex contact patch, point, and line).  In the future, we want to use this motion prediction model for developing manipulation planners for moving objects by exploiting contact with the environment.

%

\bibliographystyle{asmems4}


\appendix
\section*{Appendix A: Mathematical Background}
\begin{definition}
Let one object be described by the set $F$. Then among all convex sets containing $F$, there exists the smallest one, namely, the intersection of all convex sets containing $F$. This set is called the convex hull of $F$ ($Conv(F)$). 
\end{definition}
\begin{definition}
Given a convex hull of object $Conv(F)$, the extreme points of the convex hull is a point $\bm{x}\in Conv(F)$ with the property that if $\bm{x} = \lambda \bm{y} +(1-\lambda)\bm{z}$ with $\bm{y},\bm{z}\in Conv(F)$ and $\lambda \in [0,1]$, then $\bm{z} = \bm{x}$ and/or $\bm{y} = \bm{x}$.
\end{definition}



\end{document}